\def\eqref#1{equation~\ref{#1}}
\def\1{\bm{1}}
\DeclareMathAlphabet{\mathsfit}{\encodingdefault}{\sfdefault}{m}{sl}
\SetMathAlphabet{\mathsfit}{bold}{\encodingdefault}{\sfdefault}{bx}{n}
\newtheorem{theorem}{Theorem}
\newtheorem{lemma}[theorem]{Lemma}
\newtheorem{definition}{Definition}
\newcommand{\xmark}{\ding{55}}%
\newcommand{\cmark}{\ding{51}}%
\definecolor{myblue}{RGB}{43, 115, 219}
\newcommand{\bluecheck}{{\color{myblue}\cmark}}
\newcommand{\modelname}{ReBis}
\newcommand{\highlight}[1]{\colorbox{blue!10}{#1}}
\title{Towards Control-Centric Representations in Reinforcement Learning from Images}
\author{Chen Liu\textsuperscript{\rm 1 $\dagger$}, Hongyu Zang\textsuperscript{\rm 1 $\dagger$}, Xin Li\textsuperscript{\rm 1 $*$}, Yong Heng\textsuperscript{\rm 2}, Yifei Wang\textsuperscript{\rm 3}, Zhen Fang\textsuperscript{\rm 1}, \\ \textbf{Yisen Wang\textsuperscript{\rm 4,5}, Mingzhong Wang\textsuperscript{\rm 6}} \\
    \textsuperscript{\rm 1} Beijing Institute of Technology, China \\ 
    \textsuperscript{\rm 2} Beijing Institute of Electronic System Engineering, Beijing 100854, China \\ \textsuperscript{\rm 3} School of Mathematical Sciences, Peking University \\ 
    \textsuperscript{\rm 4} National Key Lab of General Artificial Intelligence, \\  
    {\,~ School of Intelligence Science and Technology, Peking University} \\
    \textsuperscript{\rm 5} Institute for Artificial Intelligence, Peking University \\
    \textsuperscript{\rm 6} University of the Sunshine Coast, Australia
    \\
    \texttt{\{chenliu,zanghyu,xinli,zhenfang\}@bit.edu.cn} \quad \texttt{hengyong\_biese@163.com} \\  \texttt{\{yifei\_wang, yisen.wang\}@pku.edu.cn} \quad  \texttt{mwang@usc.edu.cn} 
}
\begin{document}

\maketitle
\def\thefootnote{$\dagger$}\footnotetext{These authors contributed equally to this work, order is determined by a random drawing.}
\def\thefootnote{$*$}\footnotetext{Corresponding to Xin Li.}

\begin{abstract}
Image-based Reinforcement Learning is a practical yet challenging task. A major hurdle lies in extracting control-centric representations while disregarding irrelevant information. While approaches that follow the bisimulation principle exhibit the potential in learning state representations to address this issue, they still grapple with the limited expressive capacity of latent dynamics and the inadaptability to sparse reward environments. To address these limitations, we introduce \modelname, which aims to capture control-centric information by integrating reward-free control information alongside reward-specific knowledge. \modelname~utilizes a transformer architecture to implicitly model the dynamics and incorporates block-wise masking to eliminate spatiotemporal redundancy. Moreover, \modelname~combines bisimulation-based loss with asymmetric reconstruction loss to prevent feature collapse in environments with sparse rewards. Empirical studies on two large benchmarks, including Atari games and DeepMind Control Suit, demonstrate that \modelname~has superior performance compared to existing methods, proving its effectiveness. 
\end{abstract}

\section{Introduction}

Practical applications of reinforcement learning (RL) necessitate the ability to teach an agent to control itself in an environment with a visually intricate observation space. When visual signals serve as observations, the agent continuously receives images during its interaction with the environment. These images are not only temporally correlated but also carry substantial spatial redundancy~\citep{mae,videomae,BEiT}, which can potentially introduce distractions and noise to prevent the agent from yielding desired policies.
Imagine an RL agent with the goal of seeking a target position while being confronted with a TV emitting uncontrollable random noise. The observation of noisy TV should not distract the agent's attention to finding its path as it is neither relevant to the agent's control nor helpful in getting a higher reward. In such a scenario, it is imperative for the agent to learn the representation of the environment that captures relevant information for control while ignoring irrelevant information.

Representation learning tailored for RL is a promising way to improve the perception of the agent by extracting information from noisy observations into low-dimensional vectors. Common approaches include reconstructing observations via an autoencoder~\citep{DBLP:conf/aaai/Yarats0KAPF21}, applying data augmentation~\citep{drq,curl}, or devising auxiliary tasks~\citep{mlr,yu2021playvirtual,DBLP:journals/corr/abs-1902-06865, DBLP:conf/iclr/JaderbergMCSLSK17} to reduce redundancies in observation. However, they cannot guarantee the preservation of task-specific information in decision-making tasks. Behavioral metrics~\citep{DBLP:conf/aaai/LiaoZ023, chen2022learning} appear to be a promising solution to mitigate this issue. 

A prominent category of behavioral metrics, named Bisimulation metrics~\citep{DBLP:conf/aaai/FernsPP04,DBLP:conf/uai/FernsCPP06,DBLP:conf/aaai/Castro20}, aims to capture structures in the environment by learning a metric that measures behavioral similarities between states. This behavioral similarity considers the distance between (i) their immediate rewards and (ii) their transition distributions, thereby guiding the agent's focus toward the task it is supposed to solve. Recent work~\citep{dbc,castro2021mico,zang2022simsr} has successfully applied the bisimulation principle to shape the representations of deep RL agents to capture task-specific information and accelerate policy learning. 

However, we have identified that there still are theoretical obstacles when applying bisimulation-based approaches in practical state representation learning. Firstly, the convergence of bisimulation metrics requires an unbiased estimation when incorporating latent dynamics modeling. However, modeling via Gaussian distribution is notably restricted, especially when the underlying distribution is multi-modal. This limitation results in substantial approximation error, which might inadvertently disrupt the representation learning process. 
Secondly, control-relevant but reward-free information could be vital in environments with uninformative rewards, such as sparse or near-constant rewards.  In these cases, 
% Secondly, there exists information that is relevant to control the agent while being reward-free. In the case of the environments with uninformative rewards (\textit{e.g.}, sparse or near-constant), 
%TODO: 统一一下term：dynamics model vs. dynamic model
bisimulation objectives might inaccurately assume all states to be equivalent, leading to collapsed representations. Therefore, at the control level, a model capable of forecasting spatiotemporal information may produce informative representations that are beneficial for dynamic modeling and effective in guiding the agent's actions. This highlights the importance of not only guaranteeing bisimulation capability but also mitigating spatio-temporal redundancy. Achieving this balance is essential for empowering the agent with the ability to understand and learn control-centric information.

% \chen{is Reconstruction via Dynamic Bisimulation ok?}
%不好，原来zzen怎么shshu说的llalailai'z来着？cchchuchu'f除非hhouhou'm后面oyoyou有dynamic bisimulationd的ddiding定yi义

We propose \modelname~(latent \underline{RE}construction with \underline{BIS}imulation measurement), to learn control-centric representations and effectively address the aforementioned issues. 
% Accordingly, we propose \textbf{\change{to leverage Bisimulation to supervise the reconstruction of observations in their latent feature space (\modelname)}} meanwhile guide the dynamic modeling to extract control-centric representations, effectively addressing the aforementioned difficulties in bisimulation. %Specifically, since a desired representation should be sufficiently expressive to capture the dynamics while tractable in practice, we turn to the transformer architecture~\citep{DBLP:conf/nips/VaswaniSPUJGKP17} to model the forward dynamics implicitly, which also improves the perception of the spatiotemporal information;
Intuitively, reconstructing a visual signal with high information density through low-dimensional feature embeddings, a common practice in the computer vision domain, can successfully preserve spatiotemporal information. However, it is unnecessary and inefficient to reconstruct at the pixel level as this contains significant redundancies. Therefore, we opt to reconstruct the latent features instead of raw observations, maintaining essential information relevant to control while reducing unnecessary spatiotemporal redundancies.

Considering that an appropriate representation should be expressive enough to encapsulate the dynamics while remaining practically tractable, we start with bisimulation objectives with approximate dynamics. Consequently, we utilize the transformer architecture~\citep{DBLP:conf/nips/VaswaniSPUJGKP17} to implicitly model the forward dynamics, thereby enhancing the awareness of multi-modal behavior while extracting temporal information from the observation sequences. 
To reduce spatiotemporal redundancy and better capture the reward-free control information, we incorporate Block-wise masking~\citep{DBLP:conf/cvpr/00050XWYF22} to minimize the interference of irrelevant exogenous spatiotemporal noise in the observation space. Moreover, we address the challenge of bisimulation objectives collapsing in environments with sparse rewards by developing an asymmetric latent reconstruction loss that effectively prevents failure cases, ensuring the soundness of our model.
\modelname~serves as a state representation learning module and can be seamlessly integrated into any existing downstream RL framework to enhance the agent's understanding of the environment. We summarize our main contributions as follows:
\begin{itemize}[leftmargin=0.5cm, itemindent=0cm]
\setlength{\itemsep}{0pt}
\setlength{\parsep}{0pt}
\setlength{\parskip}{0pt}
    %\item We analyze the bound and the potential harm of the previous objectives that follows bisimulation principles, emphasizing the necessity of the high expressiveness dynamics model and dropping spatiotemporal redundancy.
    \item We recognize the limitations of previous work that adheres to the bisimulation principle for RL representation learning. Our study highlights the importance of a highly expressive dynamics model and the necessity of capturing reward-free control information.
    \item We propose \modelname~as an efficient method for learning state representation tailored to vision-based RL, encompassing both spatial-temporal consistency and long-term behavior similarity. 
    \item We demonstrate the superior performance of \modelname~on large benchmarks, including Atari games~\citep{bellemare2013arcade} and DeepMind Control Suite~\citep{tassa2018deepmind}.
\end{itemize}

\section{Preliminaries}

\subsection{Image-based RL}
We begin by introducing the notations and outlining the realistic assumptions regarding the underlying structure in the environment, as the paper focuses on the image-based RL tasks. 

In most practical settings, the agent does not have access to the actual states while interacting with the environment. Instead, it receives limited information through observations~\citep{bharadhwajinformation}. We consider the learning process as a partially observable Markov decision process (POMDP), which is formulated as $(\mathcal{O}, \mathcal{S}, \mathcal{A}, P, \rho, r, \gamma)$, including a potentially infinite observation space $\mathcal{O}$ (\textit{e.g.}, pixels), a low-dimensional latent state space $\mathcal{S}$, and an action space $\mathcal{A}$. The latent state can be derived from an observation with a projection
function (for instance, a neural network as an encoder)~\footnote{In this paper, the concept of state representation refers to the latent state embedding that is output from the projection function, \textit{i.e.}, $s = \phi(o)$.}, $\phi:\mathcal{O} \rightarrow \mathcal{S}$.

At time step $t$, let $o_t \in \mathcal{O}$ represent the observation composed of stacked frames, and $a_t \in \mathcal{A}$ denote the action. The dynamics can be described by %the initial observation distribution $\rho$ and 
the transition probability function $P$, which determines the next observation of the agent $o_{t+1}\sim P(\cdot|o_{t},a_t)$ (or the next latent state of the agent $s_{t+1}\sim P(\cdot|s_{t},a_t)$ in the context). 
The performance of the observation-action pair is quantified by the reward function $r(o, a)\in [R_\text{min}, R_\text{max}]$ provided by the environment\footnote{Subsequently, we normalize the reward given by the environment to ensure that the reward utilized is definitively bounded.}. Moreover, $\gamma$ is a discount factor $(0 < \gamma < 1)$, which quantifies the value we weigh for future rewards.%.  
The agent aims to find the optimal policy $\pi(a|s)$ to maximize the expected reward $\mathbb{E}_{\pi}\left[\sum_{t=0}^{\infty} \gamma^{t} r\left(o_{t}, a_{t}\right)\right]$. The learning problem becomes tractable via the projection function $\phi$ to learn a policy of the form $\pi(a|\phi(o))$.

\subsection{Bisimulation}
\label{sec:bisimulation}

In this work, we are specifically interested in preserving the inherent behavior of the states regarding task-specific information, which draws our attention to Bisimulation metrics. Bisimulation metrics were initially introduced as a pseudometric: $d:\mathcal{S}\times\mathcal{S}\rightarrow\mathbb{R}$ in ~\citet{DBLP:conf/aaai/FernsPP04, DBLP:conf/uai/FernsCPP06} to measure the behavioral distance between states, which includes a reward difference term and a Wasserstein distance between transitions. Recently, \citet{DBLP:conf/aaai/Castro20} proposed an alternative metric known as the on-policy bisimulation ($\pi$-bisimulation) metric. Unlike the standard bisimulation metric, $\pi$-bisimulation metric focuses on behavior relative to a specific policy $\pi$\footnote{For notation simplicity, we use $P_{s}^{a}$ and $r_{s}^{a}$ to denote $P(\cdot|s, a)$ and $r(s, a)$, respectively, to represent the transition and the reward function in the state space.}:

\begin{theorem}
\label{theorem:on-policy-bisim}
($\pi$-bisimulation metric~\citep{DBLP:conf/aaai/Castro20})
Define $\mathcal{F}^{\pi}: \mathcal{M} \rightarrow \mathcal{M}$ by 
\begin{equation}
    \mathcal{F}^{\pi}(d^{\pi})(s_i, s_j)=|r_{s_i}^\pi-r_{s_j}^\pi|+\gamma \mathcal{W}(d^{\pi})\left(P_{s_i}^\pi, P_{s_j}^\pi\right),
\end{equation}
 where $s_i,s_j\in \mathcal{S}$, $r_{s_i}^\pi=\sum_{a\in\mathcal{A}}\pi(a|s_i)r_{s_i}^{a}$ , $P_{s_i}^\pi=\sum_{a\in\mathcal{A}}\pi(a|s_i)P_{s_i}^a$, and $\mathcal{W}$ is the Wasserstein distance between distributions.
$\mathcal{F}^{\pi}$ has a least fixed point $d_{\sim}^{\pi}$, and $d_{\sim}^{\pi}$ is a $\pi$-bisimulation metric.
\end{theorem}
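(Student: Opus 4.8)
The plan is to exhibit $\mathcal{F}^\pi$ as a $\gamma$-contraction on a complete metric space and apply the Banach fixed point theorem; since the fixed point is then unique, it is automatically the least one. Recall that $\mathcal{M}$ denotes the space of bounded pseudometrics on $\mathcal{S}$; endow it with the uniform distance $\rho(d_1,d_2)=\sup_{s,s'\in\mathcal{S}}|d_1(s,s')-d_2(s,s')|$. First I would check that $(\mathcal{M},\rho)$ is complete: a $\rho$-Cauchy sequence converges pointwise, and non-negativity, symmetry, $d(s,s)=0$ and the triangle inequality all survive the pointwise limit, so the limit again lies in $\mathcal{M}$.

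Next I would verify $\mathcal{F}^\pi(\mathcal{M})\subseteq\mathcal{M}$. Non-negativity, symmetry, and $\mathcal{F}^\pi(d)(s,s)=0$ are immediate from the corresponding properties of $|\cdot|$ and of $\mathcal{W}(d)$ together with $\mathcal{W}(d)(P_s^\pi,P_s^\pi)=0$; boundedness follows since $r\in[R_\text{min},R_\text{max}]$ and $\mathcal{W}(d)$ is bounded whenever $d$ is. The one nontrivial property is the triangle inequality, which reduces to the triangle inequality for the Wasserstein distance, $\mathcal{W}(d)(P_{s_i}^\pi,P_{s_k}^\pi)\le \mathcal{W}(d)(P_{s_i}^\pi,P_{s_j}^\pi)+\mathcal{W}(d)(P_{s_j}^\pi,P_{s_k}^\pi)$; I would establish this by the gluing construction, composing optimal (or near-optimal) couplings of $(P_{s_i}^\pi,P_{s_j}^\pi)$ and $(P_{s_j}^\pi,P_{s_k}^\pi)$ along their shared marginal $P_{s_j}^\pi$ and using that $d$ itself obeys the triangle inequality. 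Combining with $|r_{s_i}^\pi-r_{s_k}^\pi|\le|r_{s_i}^\pi-r_{s_j}^\pi|+|r_{s_j}^\pi-r_{s_k}^\pi|$ closes this point.

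The core estimate is the contraction bound $\rho(\mathcal{F}^\pi(d_1),\mathcal{F}^\pi(d_2))\le\gamma\,\rho(d_1,d_2)$. The reward terms cancel, so it suffices to show $|\mathcal{W}(d_1)(P,Q)-\mathcal{W}(d_2)(P,Q)|\le\rho(d_1,d_2)$ for all distributions $P,Q$ on $\mathcal{S}$: feeding an optimal coupling $\lambda$ for $\mathcal{W}(d_2)(P,Q)$ into the variational problem defining $\mathcal{W}(d_1)(P,Q)$ gives $\mathcal{W}(d_1)(P,Q)\le\int(d_2+\rho(d_1,d_2))\,\mathrm{d}\lambda=\mathcal{W}(d_2)(P,Q)+\rho(d_1,d_2)$, and symmetrically. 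Multiplying by $\gamma$ and taking the supremum over $(s_i,s_j)$ yields the contraction, so Banach's theorem produces a unique $d_\sim^\pi\in\mathcal{M}$ with $\mathcal{F}^\pi(d_\sim^\pi)=d_\sim^\pi$, obtained as the limit of the iterates $(\mathcal{F}^\pi)^n(d_0)$ from any $d_0\in\mathcal{M}$; uniqueness makes it the least fixed point. (A monotonicity argument plus the Knaster--Tarski theorem is an alternative route to a least fixed point, but the contraction route additionally gives the convergence-of-iterates statement.) Finally, to see that $d_\sim^\pi$ is a $\pi$-bisimulation metric I would show its kernel $\{(s,s'):d_\sim^\pi(s,s')=0\}$ coincides with the $\pi$-bisimulation equivalence relation, by verifying that this zero set is a fixed point of the relational bisimulation operator — a pair has zero distance exactly when the policy-averaged rewards agree and the policy-averaged transition distributions are equal modulo the relation — which again uses that $\mathcal{W}(d)(P,Q)=0$ iff $P=Q$ when $d$ separates points of the support.

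The step I expect to be the main obstacle is handling the Wasserstein distance rigorously when $\mathcal{S}$ is infinite and non-compact: one must guarantee that optimal couplings exist (or work with $\varepsilon$-optimal ones and pass to a limit), that the gluing lemma applies, and that $\mathcal{W}(d)$ is finite for every $d\in\mathcal{M}$. If $\mathcal{S}$ is finite these concerns vanish; otherwise I would either specialize to a well-behaved setting (e.g.\ $\mathcal{S}$ Polish with bounded $d$) or carry an $\varepsilon$ slack through each of the inequalities above.
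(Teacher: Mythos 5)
Your proposal is correct and follows essentially the same route the paper relies on: the paper does not reprove this cited result (Castro, 2020) but explicitly invokes the Banach fixed-point argument, i.e.\ exactly your $\gamma$-contraction of $\mathcal{F}^\pi$ on the complete space of bounded pseudometrics under the sup distance, with the Wasserstein term being $1$-Lipschitz in the underlying pseudometric. Your added care about existence of optimal couplings and the gluing lemma in non-compact state spaces is a reasonable refinement but not a departure from that standard argument.
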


The Banach fixed-point theorem can be applied to ensure the existence of a unique metric $d_{\sim}^\pi$, allowing us to measure the distance between distinct states via $d_{\sim}^{\pi}$. This concept has inspired subsequent research to leverage $\pi$-bisimulation metrics to shape the representations of deep RL agents~\citep{dbc,castro2021mico, zang2022simsr}.
% The application of $\pi$-bisimulation metric can be seen as a way to simplify the control tasks under high dimensional observations by identifying and merging bisimilar states, thereby creating an abstract problem with a reduced state space. 
For instance, ~\citet{zang2022simsr}, which learns representations by integrating cosine distance with bisimulation-based measurements, is formulated as:

\begin{equation}
\label{eq:formal_bisim}
 \mathcal{F}^{\pi} \bar{d}(\phi^{\pi}(o_i),\phi^{\pi}(o_j)) = |r_{o_i}^{\pi} - r_{o_j}^{\pi}| + \gamma \mathbb{E}_{\begin{subarray} {l}u\sim P_{\phi^{\pi}(o_i)}^{\pi} \\ v\sim P_{\phi^{\pi}(o_j)}^{\pi}\end{subarray}}[\bar{d}(u,v)],
\end{equation}
where $\bar{d}$ represents cosine distance and $P_{\phi^{\pi}(o_i)}^\pi$ is the transition model on the latent embedding space. By minimizing the difference between $\bar{d}(\phi^{\pi}(o_i),\phi^{\pi}(o_j))$ and $\mathcal{F}^\pi\bar{d}(\phi^{\pi}(o_i),\phi^{\pi}(o_j))$ through a mean squared error (MSE) objective, we can obtain state representations with meaningful semantics, which can be beneficial for downstream policy training.

\section{Theoretical Analysis}
\label{sec:theory}
% simsr是在没偏差下做的假设，当forward dynamic有偏差的情况下 我们仍能得到不动点（https://openreview.net/pdf?id=7YXXt9lRls）
% 使用transformer架构的必要性：有forward dynamic的情况下，如果diameter一直增加，就会爆炸；因此需要良好的架构；建模过程中需要考虑历史信息（https://arxiv.org/pdf/2211.03011.pdf），这样比较有利；
% 使用mask的必要性：mask结构可以引入asymmetric alignment，能避免稀疏环境下表现差（ https://arxiv.org/abs/2210.08344 + https://openreview.net/pdf?id=7YXXt9lRls)
In this section, we primarily focus on a cosine distance-based bisimulation measurement~\citep{zang2022simsr}, highlighting potential barriers to the practical application of the bisimulation principle. Specifically, we first discuss the sufficient condition for the existence of a unique measurement $d_\sim^{\pi}$ based on approximate dynamics. Thereafter, we illustrate the potential issues with modeling dynamics using a Gaussian distribution. 
% \chen{We haven't started talking about Transformers yet} 
Finally, we emphasize how uninformative rewards can induce feature collapse in bisimulation objectives. The proofs of theorems are provided in Appendix~\ref{appsec:proof}.

As discussed in ~\citet{DBLP:conf/nips/KemertasA21}, when using an approximate forward dynamics model $\hat{P}: \mathcal{S}\times\mathcal{A}\rightarrow\mathcal{M}(\mathcal{S}')$ (where $\mathcal{M}(\mathcal{X})$ denotes the space of all probability distributions over $\mathcal{X}$), the convergence guarantees may not be applicable if compactness is not guaranteed. As a result, convergence could be problematic when the approximation error is large. We now propose a sufficient condition for a unique measurement $d_\sim^{\pi}$ based on approximate dynamics.

\begin{theorem}[Boundedness Condition for Convergence] 
\label{thm:boundedness}
Assume $\mathcal{S}$ is compact and we have approximate dynamics $\hat{P}$, with its support being a closed subset of $\mathcal{S}$. Then, a unique bisimulation measurement $d_\sim^{\pi}$ of the form given in Equation~\ref{eq:formal_bisim} exists, and this measurement is bounded:
\begin{align}
    \mathrm{supp}(\hat{P}) \subseteq \mathcal{S} \Rightarrow \mathrm{diam}(\mathcal{S}; d_\sim^{\pi}) \leq \frac{1}{1 - \gamma}(R_{\mathrm{max}} - R_{\mathrm{min}}),
\end{align}
where $\mathrm{diam}$ is Diameter of $\mathcal{S}$.
\end{theorem}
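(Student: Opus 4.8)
The plan is to recast the update operator $\mathcal{F}^{\pi}$ as a contraction on a suitable complete metric space of pseudometrics and then apply the Banach fixed-point theorem, after which the stated diameter bound will follow by iterating $\mathcal{F}^{\pi}$ from the zero pseudometric. First I would fix the ambient space: let $\mathcal{M}$ denote the set of bounded pseudometrics $d$ on $\mathcal{S}$ (in the cosine-distance surrogate sense used in Equation~\ref{eq:formal_bisim}), equipped with the supremum metric $\rho(d,d') = \sup_{s,s'} |d(s,s') - d'(s,s')|$; since $\mathcal{S}$ is compact, this is a complete metric space. The hypothesis that $\mathrm{supp}(\hat{P})$ is a closed subset of the compact set $\mathcal{S}$ is what makes the expectation term $\mathbb{E}_{u \sim P^{\pi}_{\phi(o_i)},\, v \sim P^{\pi}_{\phi(o_j)}}[d(u,v)]$ well defined and finite for every bounded $d$ — this is precisely the point flagged in the discussion of \citet{DBLP:conf/nips/KemertasA21}, namely that without compactness/closedness the iterates of $\mathcal{F}^{\pi}$ may escape the space on which it is a contraction.

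Next I would verify that $\mathcal{F}^{\pi}$ maps $\mathcal{M}$ into itself and is a $\gamma$-contraction. Self-maps: if $d$ is a bounded pseudometric then $\mathcal{F}^{\pi}(d)(s_i,s_j) = |r^{\pi}_{s_i} - r^{\pi}_{s_j}| + \gamma\, \mathbb{E}[d(u,v)]$ is symmetric, vanishes on the diagonal, and satisfies the triangle inequality (the reward term does so trivially, and the expectation term inherits it by linearity of expectation and a coupling argument — or directly, since $d \mapsto \mathbb{E}_{u,v}[d(u,v)]$ with fixed marginals preserves the triangle inequality). Contraction: for $d, d' \in \mathcal{M}$,
\begin{equation}
|\mathcal{F}^{\pi}(d)(s_i,s_j) - \mathcal{F}^{\pi}(d')(s_i,s_j)| = \gamma\, \bigl| \mathbb{E}_{u,v}[d(u,v) - d'(u,v)] \bigr| \leq \gamma\, \rho(d,d'),
\end{equation}
uniformly in $(s_i,s_j)$, so $\rho(\mathcal{F}^{\pi}(d), \mathcal{F}^{\pi}(d')) \leq \gamma\, \rho(d,d')$. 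By Banach's theorem there is a unique fixed point $d^{\pi}_{\sim} \in \mathcal{M}$.

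Finally, for the diameter bound I would start the fixed-point iteration at $d_0 \equiv 0$ and show by induction that $\mathrm{diam}(\mathcal{S}; d_k) \leq \sum_{t=0}^{k-1} \gamma^{t} (R_{\max} - R_{\min})$. The base case is immediate; for the inductive step, $\mathcal{F}^{\pi}(d_k)(s_i,s_j) \leq |r^{\pi}_{s_i} - r^{\pi}_{s_j}| + \gamma\, \mathrm{diam}(\mathcal{S}; d_k) \leq (R_{\max} - R_{\min}) + \gamma \sum_{t=0}^{k-1}\gamma^{t}(R_{\max}-R_{\min})$, using that the policy-averaged reward $r^{\pi}_{s}$ still lies in $[R_{\min}, R_{\max}]$. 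Passing to the limit $k \to \infty$ and invoking $\rho$-convergence $d_k \to d^{\pi}_{\sim}$ gives $\mathrm{diam}(\mathcal{S}; d^{\pi}_{\sim}) \leq \frac{1}{1-\gamma}(R_{\max} - R_{\min})$.

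I expect the main obstacle to be the first paragraph rather than the last: carefully justifying that $\mathcal{F}^{\pi}$ genuinely preserves boundedness and the pseudometric axioms in the \emph{cosine-distance} setting of Equation~\ref{eq:formal_bisim} (which replaces the Wasserstein term of Theorem~\ref{theorem:on-policy-bisim} with an expectation over the learned latent transition model), and in pinning down exactly where the closed-support hypothesis on $\hat{P}$ is used to keep the iterates inside the complete space — everything after that is a routine contraction-mapping plus geometric-series argument.
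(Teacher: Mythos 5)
Your proposal is correct and follows essentially the same route as the paper's own proof: establish that $\mathcal{F}^{\pi}$ is a $\gamma$-contraction on bounded pseudometrics (Banach fixed point for existence and uniqueness), use the closed-support hypothesis $\mathrm{supp}(\hat{P})\subseteq\mathcal{S}$ to bound the next-state expectation term by $\mathrm{diam}(\mathcal{S};d)$, and then extract the bound $\frac{1}{1-\gamma}(R_{\max}-R_{\min})$ from a geometric-series argument. The only cosmetic difference is that you iterate from the zero pseudometric and pass to the limit, whereas the paper plugs the diameter bound directly into the fixed-point equation $d_\sim^{\pi}=\mathcal{F}^{\pi}d_\sim^{\pi}$ and rearranges; both are sound.
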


Following \citet{DBLP:conf/nips/KemertasA21}, given the approximate dynamics $\hat{P}$, we have:
\begin{theorem}
\label{thm:approx_error}
    Define $\mathcal{E}_{P}:=\sup _{s \in \mathcal{S}} W_1\left(d_\sim^{\pi}\right)\left(P^\pi_s, \hat{P}^\pi_s\right)$. Then $\left\|d_\sim^{\pi}-\hat{d}_\sim^{\pi}\right\|_{\infty} \leq \frac{2}{1-\gamma} \mathcal{E}_{P}$, where $\hat{d}_\sim^{\pi}$ is the approximate fixed point. 
\end{theorem}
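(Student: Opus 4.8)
\emph{Proof plan.} This is a fixed-point perturbation estimate, so the plan is to exhibit $d_\sim^{\pi}$ and $\hat{d}_\sim^{\pi}$ as fixed points of two $\gamma$-contractions that differ only in their dynamics argument, and then run the standard add-and-subtract argument. First I would fix notation: let $\mathcal{F}^{\pi}$ denote the update operator of Equation~\ref{eq:formal_bisim} built from the true on-policy dynamics $P^{\pi}$, and let $\hat{\mathcal{F}}^{\pi}$ be the same operator with $P^{\pi}$ replaced by the approximate dynamics $\hat{P}^{\pi}$. By Theorem~\ref{theorem:on-policy-bisim} together with the compactness and closed-support hypotheses used in Theorem~\ref{thm:boundedness}, both operators are $\gamma$-contractions on the complete space of bounded (pseudo)metrics under $\|\cdot\|_{\infty}$, with unique fixed points $d_\sim^{\pi} = \mathcal{F}^{\pi} d_\sim^{\pi}$ and $\hat{d}_\sim^{\pi} = \hat{\mathcal{F}}^{\pi}\hat{d}_\sim^{\pi}$; this existence is exactly what is assumed when the statement speaks of ``the approximate fixed point''.

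Next I would expand $\|d_\sim^{\pi} - \hat{d}_\sim^{\pi}\|_{\infty} = \|\mathcal{F}^{\pi} d_\sim^{\pi} - \hat{\mathcal{F}}^{\pi}\hat{d}_\sim^{\pi}\|_{\infty}$ by inserting $\hat{\mathcal{F}}^{\pi} d_\sim^{\pi}$ --- that is, the \emph{true} fixed point plugged into the \emph{approximate} operator. The triangle inequality together with the $\gamma$-contractivity of $\hat{\mathcal{F}}^{\pi}$ gives $\|d_\sim^{\pi} - \hat{d}_\sim^{\pi}\|_{\infty} \leq \|\mathcal{F}^{\pi} d_\sim^{\pi} - \hat{\mathcal{F}}^{\pi} d_\sim^{\pi}\|_{\infty} + \gamma\|d_\sim^{\pi} - \hat{d}_\sim^{\pi}\|_{\infty}$, hence $(1-\gamma)\|d_\sim^{\pi} - \hat{d}_\sim^{\pi}\|_{\infty} \leq \|\mathcal{F}^{\pi} d_\sim^{\pi} - \hat{\mathcal{F}}^{\pi} d_\sim^{\pi}\|_{\infty}$. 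The reason to insert $\hat{\mathcal{F}}^{\pi} d_\sim^{\pi}$ rather than the more obvious $\mathcal{F}^{\pi}\hat{d}_\sim^{\pi}$ is that the residual term is then measured against the \emph{true} metric $d_\sim^{\pi}$, which is precisely the ground metric appearing in the definition of $\mathcal{E}_{P}$.

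To finish I would bound $\|\mathcal{F}^{\pi} d_\sim^{\pi} - \hat{\mathcal{F}}^{\pi} d_\sim^{\pi}\|_{\infty}$ pointwise: for any $(s_i,s_j)$ the reward terms $|r^{\pi}_{s_i}-r^{\pi}_{s_j}|$ agree in the two operators and cancel, leaving $\gamma\bigl|\mathcal{W}(d_\sim^{\pi})(P^{\pi}_{s_i},P^{\pi}_{s_j}) - \mathcal{W}(d_\sim^{\pi})(\hat{P}^{\pi}_{s_i},\hat{P}^{\pi}_{s_j})\bigr|$. Since $d_\sim^{\pi}$ is a pseudometric, $W_1(d_\sim^{\pi})$ satisfies the triangle inequality on probability measures, so this is at most $\gamma\bigl(W_1(d_\sim^{\pi})(P^{\pi}_{s_i},\hat{P}^{\pi}_{s_i}) + W_1(d_\sim^{\pi})(P^{\pi}_{s_j},\hat{P}^{\pi}_{s_j})\bigr) \leq 2\gamma\,\mathcal{E}_{P}$ by the definition of $\mathcal{E}_{P}$. (If one prefers the product-coupling form of Equation~\ref{eq:formal_bisim}, the same $2\mathcal{E}_{P}$ bound follows by splitting $\mathbb{E}_{u\sim P^{\pi}_{s_i},\,v\sim P^{\pi}_{s_j}}[d_\sim^{\pi}(u,v)] - \mathbb{E}_{u\sim \hat{P}^{\pi}_{s_i},\,v\sim \hat{P}^{\pi}_{s_j}}[d_\sim^{\pi}(u,v)]$ into two terms and using that $u\mapsto d_\sim^{\pi}(u,v)$ is $1$-Lipschitz with respect to $d_\sim^{\pi}$, via Kantorovich duality.) Taking the supremum over $(s_i,s_j)$ gives $\|\mathcal{F}^{\pi} d_\sim^{\pi} - \hat{\mathcal{F}}^{\pi} d_\sim^{\pi}\|_{\infty} \leq 2\gamma\,\mathcal{E}_{P} \leq 2\mathcal{E}_{P}$, and combining with $(1-\gamma)\|d_\sim^{\pi} - \hat{d}_\sim^{\pi}\|_{\infty} \leq \|\mathcal{F}^{\pi} d_\sim^{\pi} - \hat{\mathcal{F}}^{\pi} d_\sim^{\pi}\|_{\infty}$ yields the claimed $\|d_\sim^{\pi} - \hat{d}_\sim^{\pi}\|_{\infty} \leq \frac{2}{1-\gamma}\mathcal{E}_{P}$.

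I expect the only delicate point --- and the main obstacle --- to be the ordering in the middle step: one must apply the approximate operator to the true fixed point, not the reverse, or else the residual Wasserstein terms come out with ground metric $\hat{d}_\sim^{\pi}$ and no longer match $\mathcal{E}_{P}$; repairing that would force an extra pointwise comparison of $d_\sim^{\pi}$ and $\hat{d}_\sim^{\pi}$, which this ordering avoids. A secondary check worth doing carefully is that the hypotheses borrowed from Theorem~\ref{thm:boundedness} (compact $\mathcal{S}$, closed support of $\hat{P}$) genuinely make $\hat{\mathcal{F}}^{\pi}$ a $\gamma$-contraction on the same complete space, so that $\hat{d}_\sim^{\pi}$ exists and the contraction step is legitimate.
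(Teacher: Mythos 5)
Your proof is correct, but it takes a different route from the paper. The paper does not argue from first principles at all: it invokes the general perturbation lemma of Kemertas and Aumentado-Armstrong (2021), which bounds $\|d_\sim-\hat{d}_\sim\|_\infty$ by $\tfrac{2}{1-\gamma a_p}\mathcal{E}_r+\tfrac{2\gamma}{1-\gamma a_p}\mathcal{E}_P+\tfrac{\gamma(a_p-1)}{1-\gamma a_p}\mathrm{diam}(\mathcal{S};d_\sim)$ with $a_p=2^{(p-1)/p}$, and then simply sets $p=1$ so that $a_p=1$ kills the diameter term (and implicitly uses that the reward error $\mathcal{E}_r$ vanishes here, since rewards are taken from the environment rather than approximated). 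You instead reprove the needed special case directly: exhibit both metrics as fixed points of $\gamma$-contractions, insert $\hat{\mathcal{F}}^{\pi}d_\sim^{\pi}$, and bound the residual by $2\gamma\,\mathcal{E}_P$ using the triangle inequality for $W_1(d_\sim^{\pi})$ (or, for the product-coupling operator actually used in Equation~\ref{eq:formal_bisim}, the $1$-Lipschitzness plus Kantorovich duality argument you sketch). Your observation about the ordering of the add-and-subtract step is exactly right and is what keeps the residual measured in the ground metric $d_\sim^{\pi}$ appearing in $\mathcal{E}_P$. What your route buys is self-containment, no silent dropping of the $\mathcal{E}_r$ term, and in fact the slightly sharper constant $\tfrac{2\gamma}{1-\gamma}$; what the paper's route buys is brevity and access to the general $p\geq 1$ statement. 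The one hypothesis you should make explicit (as you note at the end) is that the compactness and closed-support conditions of Theorem~\ref{thm:boundedness} are being carried over so that $\hat{\mathcal{F}}^{\pi}$ maps bounded measurements to bounded measurements and its fixed point $\hat{d}_\sim^{\pi}$ exists; the paper inherits this from the cited lemma's assumption $\mathrm{supp}(\hat{P})\subseteq\mathcal{S}$.
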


Since we cannot always guarantee the condition in Theorem~\ref{thm:boundedness} during training, any violation of compactness in the approximate dynamics could potentially result in undesirable measurement expansion, thereby decreasing the performance. Moreover, Theorem~\ref{thm:approx_error} illustrates that when the error in the dynamics model is sufficiently large, it could result in a significant approximation error. %Theorem shows that the approximation error can be minimized if the dynamics model error remains sufficiently small. 
 These factors imply that using a Gaussian distribution to model forward dynamics may result in undesirable performance degradation when dealing with multi-modal and intricate environmental dynamics.  

In addition, we discover that bisimulation-based objectives are problematic in environments with sparse rewards. Specifically, in extreme cases where the reward always remains zero, the following theorem reveals that the objective leads to a trivial solution where all state representations collapse to the same point.

\begin{theorem}
\label{thm:sparse}
If the reward is constantly zero, there exists a trivial solution for the bisimulation loss where all sample representations are identical, i.e., $\forall\ s_i, s_j \in \mathcal{S},  r^{\pi}_{s_i} = r^{\pi}_{s_j}=0 \Rightarrow d_\sim^{\pi}(s_i,s_j)=0$.
% $\forall\ \pi,s, r^\pi_s=const.$
\end{theorem}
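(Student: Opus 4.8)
The plan is to produce the collapsed pseudometric $d\equiv 0$ as an explicit fixed point of the bisimulation operator under the zero-reward hypothesis, and then to invoke the fixed-point characterization already established in Theorem~\ref{theorem:on-policy-bisim} and Theorem~\ref{thm:boundedness} to conclude that it must equal $d_\sim^{\pi}$. First I would take the operator $\mathcal{F}^{\pi}$ from Equation~\ref{eq:formal_bisim} and substitute $r^{\pi}_{s}=0$ for every $s\in\mathcal{S}$: the reward-difference term $|r^{\pi}_{o_i}-r^{\pi}_{o_j}|$ vanishes identically, so $\mathcal{F}^{\pi}$ collapses to the purely contractive map $\bar d\mapsto \gamma\,\mathbb{E}_{u,v}[\bar d(u,v)]$, where $u$ and $v$ are drawn from the (approximate) latent transitions.

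Next I would verify that the identically-zero function $d_{0}(s_i,s_j)\equiv 0$ is an admissible element of the domain $\mathcal{M}$ — it is trivially a bounded pseudometric, and it is exactly the cosine distance induced by any constant embedding — and that it is a fixed point: $\mathcal{F}^{\pi}d_{0}=0+\gamma\,\mathbb{E}_{u,v}[0]=0=d_{0}$. Because $\mathcal{F}^{\pi}$ is a $\gamma$-contraction on the complete metric space $\mathcal{M}$ (this is precisely what underlies Theorem~\ref{theorem:on-policy-bisim} and is re-established under approximate dynamics in Theorem~\ref{thm:boundedness}), its fixed point is unique, hence $d_\sim^{\pi}=d_{0}\equiv 0$. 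Equivalently, in the lattice-theoretic reading, $d_\sim^{\pi}$ is the \emph{least} fixed point, every pseudometric is nonnegative so $d_\sim^{\pi}\ge 0$, and $0$ is itself a fixed point, which squeezes $0\le d_\sim^{\pi}\le 0$. Either route gives $d_\sim^{\pi}(s_i,s_j)=0$ for all $s_i,s_j\in\mathcal{S}$, which is the asserted collapse.

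To close the loop with the phrase ``there exists a trivial solution for the bisimulation loss,'' I would then observe that choosing $\phi^{\pi}$ to be any constant map makes all latent states coincide, so $\bar d(u,v)=0$ on every pair, both the target $\mathcal{F}^{\pi}\bar d$ and the prediction $\bar d$ in Equation~\ref{eq:formal_bisim} are identically zero, and the MSE bisimulation loss attains its global minimum — i.e., the encoder is degenerate. The argument is almost entirely routine; the only place that calls for mild care is the bookkeeping that $\mathcal{F}^{\pi}$ built from the \emph{approximate} latent dynamics $P^{\pi}$ still maps $\mathcal{M}$ into $\mathcal{M}$ and stays a $\gamma$-contraction, so that ``the'' fixed point (and thus its uniqueness/leastness) is genuinely well defined — but this has already been secured by Theorem~\ref{thm:boundedness}, so I anticipate no substantive obstacle.
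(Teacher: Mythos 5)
Your proof is correct, but it closes the argument by a different (and slightly cleaner) route than the paper. The paper works directly with the fixed-point identity: it writes $d_\sim^{\pi}(s_i,s_j)=|r^{\pi}_{s_i}-r^{\pi}_{s_j}|+\gamma\, d_\sim^{\pi}(s_i',s_j')\le |r^{\pi}_{s_i}-r^{\pi}_{s_j}|+\gamma\max_{s',s''}d_\sim^{\pi}(s',s'')$, solves the recursion to obtain the quantitative bound $d_\sim^{\pi}(s_i,s_j)\le \frac{1}{1-\gamma}|r^{\pi}_{s_i}-r^{\pi}_{s_j}|$, and then uses nonnegativity of the pseudometric to squeeze $d_\sim^{\pi}\equiv 0$ once all rewards vanish. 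You instead exhibit the zero pseudometric as an explicit fixed point of the zero-reward operator and conclude by uniqueness of the fixed point (or, in the lattice reading, by leastness plus nonnegativity). This is equally valid, since the $\gamma$-contraction property you lean on is exactly what is established for the operator in the paper, and your version sidesteps the pointwise-versus-supremum bookkeeping in the paper's recursion step; the uniqueness route does not even require nonnegativity. What the paper's route buys in exchange is the intermediate quantitative bound in terms of reward gaps, which also speaks to the near-constant-reward regime rather than only the exactly-zero case. Your closing observation that a constant encoder makes both the prediction and the target in the loss identically zero, hence a global minimizer of the MSE objective, is the ``trivial solution'' the theorem statement refers to; the paper leaves that step implicit.
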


As Theorem~\ref{thm:sparse} indicates, all states are erroneously considered identical, causing the representation embedding $\phi$ to collapse accordingly. This results in the agent relinquishing all information about its underlying state. This failure case is inevitable for bisimulation-based objectives in such settings. A potential solution is to enrich the agent with additional informative knowledge, enabling it to consider not only reward-specific information but also other information pertinent to its control task.

\section{Method}

\begin{figure*}[t]
\centering
\includegraphics[width=1.\textwidth]{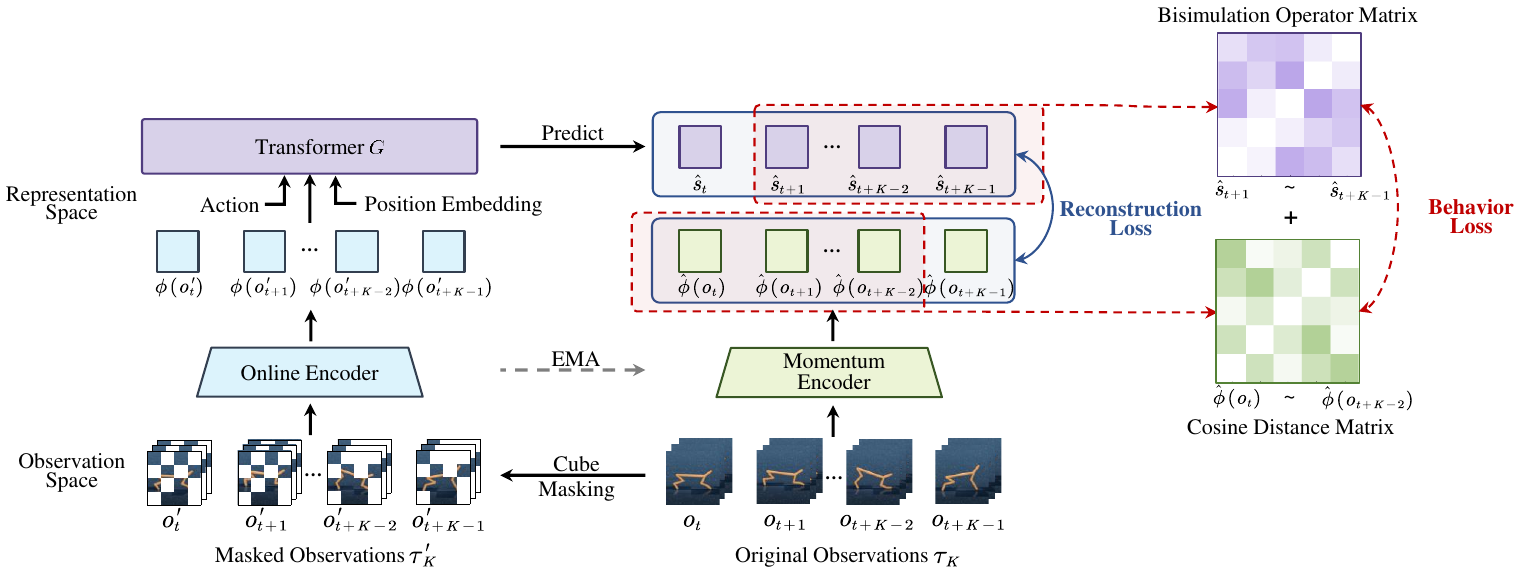}
\caption{Overview of the \modelname~ framework. Masked observations and original observations are encoded through an online encoder and a momentum encoder, respectively. The transformer $G$ is then used to predict the masked content in the latent space. The reconstruction loss is measured between $K$ pairs of state representations, and the behavior loss is measured between $K-1$ state representations. Both losses are employed concurrently to train the network. The shades of color in the matrices on the right represent the range of numerical values.}
\label{mbrframe}
\end{figure*}

As aforementioned, bisimulation-based approaches have challenges regarding the limited expressive capacity of latent dynamics and inadaptability to environments with sparse rewards. To address these representational deficiencies inherent in bisimulation principles, we propose a novel representation learning method for RL, named \modelname. \modelname~consists of three components: (a) mapping original observations to latent space via Siamese encoders with Block-wise masking, thereby reducing spatiotemporal redundancy; (b) constructing a transformer-based dynamics model to help agents capture multi-modal behaviors; and (c) 
%adding supervisory signals between the prediction representations and the original representations.
updating representation via a reconstruction procedure in the latent space following the bisimulation principle.
%updating representation via a joint objective that includes a similarity loss following the bisimulation principle and a reconstruction loss to prevent collapse. 
An overview of our method is depicted in Figure~\ref{mbrframe}.

% By using the original state representation as a reconstruction target and focusing on measuring bisimulation between state representations, we can ensure both spatiotemporal consistency and long-term behavioral similarity in the latent state space.  

\paragraph{Observation Masking and Siamese Encoding.}
We first consider Block-wise sampling~\citep{DBLP:conf/cvpr/00050XWYF22}, which masks visual inputs in spacetime to capture the most essential spatiotemporal information while discarding spatiotemporal redundancies. We randomly sample a consecutive sequence of K observations $\tau_K = \{o_t, o_{t+1}, \cdots, o_{t+K-1}\}$ through interactions with the environment, and stack 3 frames for each observation. We denote $\tau_K'=\{o_t',o_{t+1}',\cdots,o_{t+K-1}'\}$ as the masked observation sequence and $\tau_K$ as the original observation sequence. Subsequently, we utilize Siamese CNN encoder networks to project the pair of masked and original observation sequences. These weight-sharing neural networks denoted as $\phi$ and $\hat{\phi}$, are applied to two types of inputs to encode high-dimensional pixels into more task-oriented latent state representations. To prevent undesired trivial solutions, we update the parameters of the encoder network $\hat{\phi}$ with the exponential moving average (EMA) as: $\hat{\phi} \leftarrow m \hat{\phi}  + (1 - m) \phi$, where $m \in [0,1)$ is the momentum coefficient. 

\paragraph{Highly Expressive Dynamics Model.}
Given that the dynamics in real-world environments tend to be nearly deterministic, expressiveness-limited dynamics, as discussed in Section~\ref{sec:theory}, can lead to undesirable performance degradation. To address this, we employ a Transformer encoder as the forward model to enhance the expressiveness of the latent dynamics. Transformers have proven to be powerful~\citep{DBLP:journals/corr/abs-2209-00588, TransDreamer} and computationally universal~\citep{DBLP:conf/aaai/LuGAM22} (even Turing Complete~\citep{DBLP:journals/jmlr/PerezBM21}). They can also extensively exploit historical information~\citep{TransDreamer, micheli2023transformers} for representation learning, aligning with the underlying settings of POMDPs.  
The input to the transformer encoder is the full set of tokens consisting of \textit{state tokens}, \textit{action tokens}, and \textit{positional embedding}. Specifically, the masked state representation sequence $\tau_K^s=\tau_K^{\phi(o')}=\{\phi(o_t'),\phi(o_{t+1}'),\cdots,\phi(o_{t+K-1}')\}$ serves as the \textit{state tokens}, while the corresponding embedded action sequence $\tau_K^a = \tau_K^{\psi(a)}=\{\psi(a_t),\psi(a_{t+1}),\cdots,\psi(a_{t+K-1})\}$ is used as the \textit{action tokens} where $\psi$ is the embedding layer that projects actions to the same feature dimension as $\phi(o')$. We also add standard relative position embeddings to both token sequences (state and action tokens), which is denoted as $\tau_K^p$. After feeding all tokens into a Transformer encoder $G$, the output tokens, defined as $\hat{\tau}_K^s =\{\hat{s}_{t},\hat{s}_{t+1},\cdots,\hat{s}_{t+K-1}\}$, are the predictive reconstruction results for the latent representations (see Appendix~\ref{appsec:architecture} for more details). Hence, the ability of the transformer architecture to model long-range dependencies and learn inherent uncertainties within the environment serves a dual purpose. It not only retains control-centric reward-free information by leveraging a masking scheme, but also functions as an implicit dynamic model. This dual functionality promotes sample efficiency and enhances overall model performance, proving valuable in tackling image-based RL or POMDPs.

\paragraph{Learning Objective.}
We use the encoded representations from the original unmasked observation sequence as the targets for reconstruction and prediction. Employing the transformer encoder as a highly expressive dynamics model, we first define a bisimulation-based update operator as below.
\begin{definition}
    Given policy $\pi$, we define the update operator as
    \begin{equation}
\label{eq:new_bisim}
 \mathcal{F}^{\pi} \bar{d}(\hat{\phi}(o_i),\hat{\phi}(o_j)) = |r_{o_i}^{\pi} - r_{o_j}^{\pi}| + \gamma \bar{d}(\hat{s}_{i+1},\hat{s}_{j+1}),
\end{equation}
where $\hat{s}_{i+1},\hat{s}_{j+1}$ are sampled from transformer $G$, and rewards can be sampled from the underlying signals provided by the environment. %Then there exists a unique fixed point for this operator which is upper-bounded by $\frac{1}{1-\gamma} (R_\text{max}-R_\text{min})$.
\end{definition}

Accordingly, we can minimize the following behavioral loss to capture the behavioral characteristics that contain reward information of different state representations, given as:

\begin{equation}
\begin{aligned}
\mathcal{L}_\text{behavior} &= \text{MSE}\left(\bar{d}(\hat{\phi}(o_i),\hat{\phi}(o_j)), \mathcal{F}^{\pi} \bar{d}(\hat{\phi}(o_i),\hat{\phi}(o_j))\right).
\end{aligned}
\label{eq:bisimulation_loss}
\end{equation}

To integrate temporal information from observation sequences and enhance the expressive power of the state representations, the latent reconstruction loss is formulated as the mean squared error loss between original state representations and their predicted reconstructions in the latent space:

\begin{equation}
\label{eq:reconstruction_loss}
    \mathcal{L}_\text{reconstruction} = \text{MSE}(\tau_K^{\hat{\phi}(o)}, \hat{\tau}_K^s),
\end{equation}
where $\hat{\tau}_K:=(\hat{\tau}_K^s,\hat{\tau}_K^a)= G(\tau_K^s,\tau_K^a,\tau_K^p)$. 

%\hongyu{I would suggest to delete the first sentence.}
To concurrently optimize both the behavioral loss and the latent reconstruction loss, the overall loss function of \modelname~is formulated as:

\begin{equation}
\mathcal{L} = \mathcal{L}_\text{behavior} + \beta \mathcal{L}_\text{reconstruction},
\end{equation}
where $\beta$ weighs the importance between $\mathcal{L}_\text{behavior}$ and $\mathcal{L}_\text{reconstruction}$.
Note that, our objective does not make any assumptions about Gaussianity and can benefit from the strong expressive capabilities of the transformer architecture. In addition, we also find that the dynamics model can function as an asymmetric module in the Siamese architecture to prevent potential feature collapse in environments with uninformative rewards. The following theorem proves how such an asymmetrical architecture alleviates feature collapse by increasing the effective feature dimensionality throughout the training.

\begin{theorem}
Under mild data assumptions as in~\citet{zhuo2023towards}, each gradient update of the reconstruction loss $\mathcal{L}_\text{reconstruction}$ improves the effective dimensionality of output features $\hat\tau^s_K$.
\label{thm:effective-rank}
\end{theorem}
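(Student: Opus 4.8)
The plan is to follow the effective-rank argument of \citet{zhuo2023towards}, adapting it to our asymmetric Siamese setup where the transformer $G$ plays the role of the predictor head. First I would fix notation: let $\hat{\tau}^s_K = G(\tau^s_K,\tau^a_K,\tau^p_K)$ be the predicted latent features and let $Z$ denote the target features $\tau^{\hat\phi(o)}_K$ (treated as constants, since $\hat\phi$ is the momentum encoder and receives no gradient). The reconstruction loss is $\mathcal{L}_\text{reconstruction} = \mathrm{MSE}(Z, \hat\tau^s_K)$, and ``effective dimensionality'' is measured by the effective rank of the (centered) feature covariance matrix $\Sigma = \frac{1}{N}\sum_i \hat{s}_i \hat{s}_i^\top$ (equivalently $\exp$ of the entropy of the normalized eigenvalue spectrum, or a stable-rank surrogate $\|\Sigma\|_F^2 / \|\Sigma\|_2^2$, whichever notion \citet{zhuo2023towards} use).

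The key steps, in order: (1) Write out one gradient-descent step on $\mathcal{L}_\text{reconstruction}$ with respect to the parameters feeding into $\hat\tau^s_K$, and express the resulting change in the features to first order in the learning rate $\alpha$, i.e. $\hat{s}_i \mapsto \hat{s}_i - \alpha\, J_i (\hat{s}_i - z_i) + O(\alpha^2)$ where $J_i$ is the relevant Jacobian block. (2) Invoke the ``mild data assumptions'' of \citet{zhuo2023towards} — typically that the target features $z_i$ are sufficiently spread out / full-rank and that the predictor is (near) linear or can be linearized around the current iterate — to argue that the update direction has a positive component along the currently-suppressed eigendirections of $\Sigma$. (3) Compute the first-order change in the spectrum of $\Sigma$: show $\frac{d}{d\alpha}\lambda_k\big|_{\alpha=0}$ is, on average, larger (less negative) for small eigenvalues than for large ones, so that the eigenvalue distribution ``flattens,'' which by monotonicity of the effective-rank functional in the majorization order yields $\mathrm{erank}(\Sigma_{t+1}) \ge \mathrm{erank}(\Sigma_t)$. (4) Conclude, noting that the asymmetry (stop-gradient on $\hat\phi$, expressive nonlinear $G$) is exactly what prevents the trivial collapsing solution of Theorem~\ref{thm:sparse} from being a fixed point of this dynamics.

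The main obstacle I anticipate is step (3): controlling how the eigenvalues of $\Sigma$ move under the gradient update without the linearization being exact. The clean statement only holds under the specific regularity conditions imported from \citet{zhuo2023towards} (e.g. a whitened or isotropic target, a predictor initialized/constrained appropriately, infinitesimal learning rate), so the honest version of this proof is really ``under those same assumptions, the same computation goes through, with $G$ in place of their predictor head.'' I would therefore spend most of the write-up carefully matching our objects to theirs — in particular verifying that Eq.~\ref{eq:reconstruction_loss} with the momentum target $\tau^{\hat\phi(o)}_K$ satisfies their target-distribution hypothesis, and that feeding state/action/positional tokens jointly through $G$ does not break the Jacobian structure they rely on — and then cite their spectral lemma for the eigenvalue-flattening conclusion rather than re-deriving it. A secondary subtlety worth a remark: ``each gradient update improves'' should be read as a monotonicity statement for sufficiently small step size (or in expectation over minibatches), and I would state that caveat explicitly rather than claim a strict increase at every finite-$\alpha$ step.
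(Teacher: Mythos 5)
Your proposal follows essentially the same route as the paper's proof: the paper likewise reduces to the simplified setting of \citet{zhuo2023towards} (linear encoder, the transformer $G$ linearized as the predictor head, masking modeled as Gaussian augmentation noise), shows via their Lemma~1 that $G$ acts as a low-pass spectral filter $W=VSV^\top$ with $s_i=\sqrt{d_i/(d_i+\sigma)}$, and then cites their Theorem~4 for the effective-rank increase after a gradient step — exactly the ``match our objects to theirs and invoke their spectral result'' plan you describe. Your caveats (small step size, linearization, target-distribution hypothesis) are consistent with the assumptions the paper's formal statement imports, so the proposal is correct and not materially different in approach.
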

%TODO: 这个theorem在appendix里好像没有。需要对照正文和appendix的theorems，确保编号一致。

\paragraph{Summary.} Our proposed self-supervised auxiliary objective enables the learned state representations to effectively capture how an agent interacts with the environment. By perceiving useful spatiotemporal information and distinguishing the behavior differences between states, the agent is able to learn control-centric representations that facilitate policy learning. Serving as a plug-and-play representation learning module, \modelname~can be readily integrated into any off-the-shelf downstream RL objectives to improve the agent’s understanding of the environment.

\section{Experiments}

This section evaluates the sample efficiency and asymptotic performance of our proposed method on two commonly used benchmarks, including Atari 2600 Games~\citep{bellemare2013arcade} for discrete control and DeepMind Control Suite (DMControl)~\citep{tassa2018deepmind} for continuous control. To further assess the capability of our model \modelname~on capturing task-specific information, we evaluated its performance in more complex and realistic scenarios, where we introduced disturbances by replacing the background with natural videos~\citep{zhang2018natural}.

\subsection{Implementation Details}

\paragraph{Atari 2600 Games.} As a representation learning approach, \modelname~can be integrated into any type of downstream RL algorithm. For the experiments, we chose Rainbow~\citep{hessel2018rainbow} as the downstream RL agent. We trained and evaluated the model on the Atari-100k benchmark, which comprises 26 Atari games and allows 100k interaction steps (or 400K frames with a frame skip of 4) for training. The Human-normalized Score (HNS) was employed to measure the performance in each game. We followed the setting in \citet{agarwal2021deep} to evaluate overall performance with robust and efficient aggregate metrics, including the interquartile mean (IQM) and optimality gap (OG), with 95$\%$ confidence intervals (CIs), for a more rigorous assessment on high-variance benchmarks with limited runs. All experiment results on Atari games are based on 3 random seeds.

% \begin{wrapfigure}{rt}{0.4\textwidth}
%   \begin{center}
%   \vspace{-2.5em}
%     \includegraphics[width=0.8\linewidth]{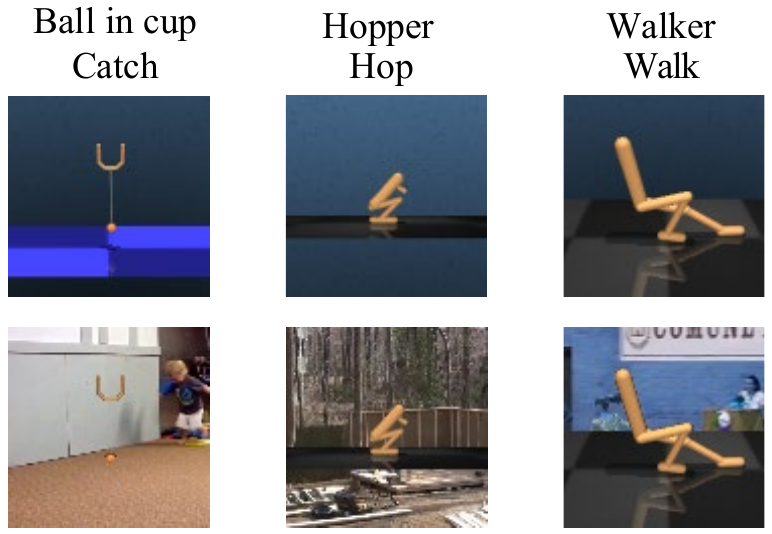}
%   \end{center}

%   \caption{Pixel observations in DMControl tasks in the default settings
% (top row) and in the natural video distraction settings (bottom row).
% \label{fig:distractor_example}}
% \end{wrapfigure}

\paragraph{DMControl with the default setting.}The DMControl is a suite of continuous control tasks, which are powered by the MuJoCo physics engine~\citep{todorov2012mujoco} and rendered using raw pixels. We chose Soft Actor-Critic~\citep{haarnoja2018soft} as the downstream RL agent and experimented on 11 environments from DMControl to evaluate the performance of \modelname, including complex dynamics, sparse rewards, and hard exploration.
We reported mean and std numerical results %compared the mean and median scores 
across 10 episodes at 500k environment steps, which are denoted as DMControl-500k benchmarks. The score for each environment ranges from 0 to 1000. All experimental results on DMControl tasks are based on 5 random seeds.

% \begin{wrapfigure}{rt}{0.4\textwidth}
%   \begin{center}
%   \vspace{-2.5em}
%     \includegraphics[width=0.7\linewidth]{video.pdf}
%   \end{center}

%   \caption{Pixel observations in DMControl tasks in the default settings
% (top row) and in the natural video distraction settings (bottom row).
% \label{fig:distractor_example}}
% \end{wrapfigure}
%We compared the mean and median scores across 10 episodes at 100k and 500k environment steps, which are denoted as DMControl-100k and DMControl-500k benchmarks, respectively. The score for each environment ranges from 0 to 1000. All experimental results on DMControl tasks are based on 5 random seeds.

\paragraph{DMControl task with Distractions.}
To assess the robustness of \modelname~on tasks with more realistic observations, we modified existing reinforcement learning tasks in DMControl to incorporate natural signals. 

\begin{wrapfigure}{rt}{0.4\textwidth}
  \begin{center}
  \vspace{-0.5em}
    \includegraphics[width=0.8\linewidth]{}
    \caption{Pixel observations in DMControl tasks in the default settings
(top row) and in the natural video distraction settings (bottom row).
\label{fig:distractor_example}}

  \end{center}

 % \caption{Pixel observations in DMControl tasks in the default settings
%(top row) and in the natural video distraction settings (bottom row).
%\label{fig:distractor_example}}
\end{wrapfigure}
In the experiments, we replaced the default simple backgrounds with natural videos from the Kinetics dataset~\citep{kay2017kinetics}, inserting them as the background of observations in DMControl tasks (see Figure~\ref{fig:distractor_example} for examples). 
Specifically, agents were trained in default environments without any background distractions and were expected to generalize to novel environments with natural video distractions. These settings significantly expand the observation space of the environments, presenting a complex challenge in effectively concentrating on task-related objects while ignoring visually distracting elements within the scenes. In this experiment, we compared the averaged scores across ten episodes at 500k environment steps over three random seeds.

\begin{figure}[b]
    \centering
    \includegraphics[width=0.95\textwidth]{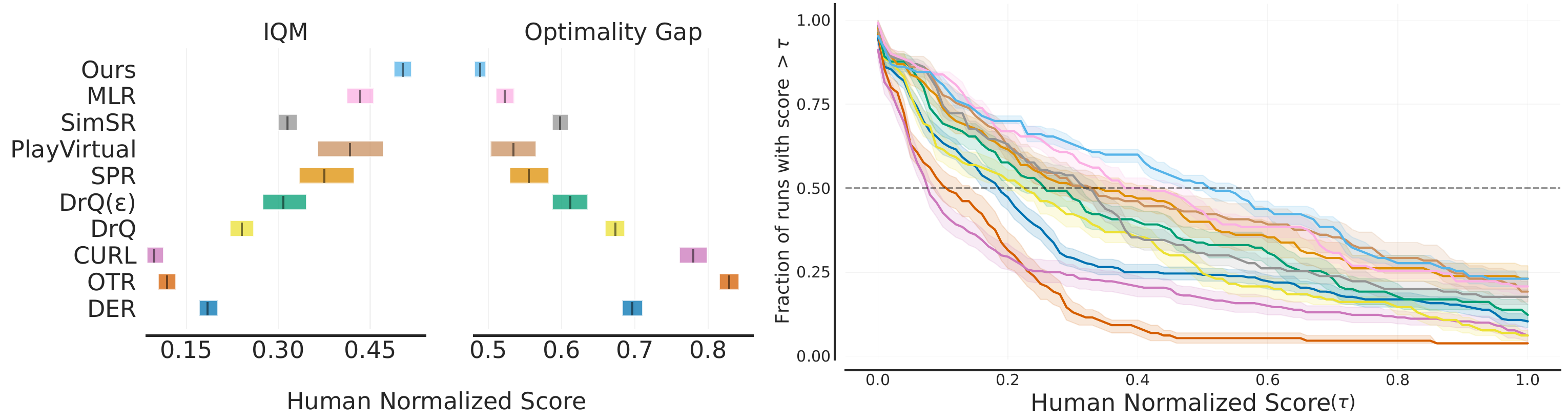}
    \caption{(\textbf{Left}) Results on Atari-100k over 3 seeds. Aggregate metrics (IQM and OG) with 95\% confidence intervals were used for the evaluation. Higher IQM and lower OG are better. (\textbf{Right}) Performance profiles on the Atari-100k benchmark based on human-normalized score distributions. Shaded regions indicate 95\% confidence bands.}
    \label{fig:atari_result}
\end{figure}
% \begin{figure}[b]
%     \centering
%     \subfigure{
%         \includegraphics[width=0.5\textwidth]{}
%     }
%     %\hspace{2mm}
%     \subfigure{
%         \includegraphics[width=0.45\textwidth]{}
%     }
%     \caption{(\textbf{Left}) Results on Atari-100k over 3 seeds. Aggregate metrics (IQM and OG) with 95\% confidence intervals were used for the evaluation. Higher IQM and lower OG are better. (\textbf{Right}) Performance profiles on the Atari-100k benchmark based on human-normalized score distributions. Shaded regions indicate 95\% confidence bands.}
%     \label{fig:atari_result}
% \end{figure}

\subsection{Experiment Results}

\paragraph{Results on Atari-100k.}We evaluated the performance of \modelname~in comparison with various methods, including MLR~\citep{mlr}, SimSR~\citep{zang2022simsr}, PlayVirtual~\citep{yu2021playvirtual}, SPR~\citep{schwarzer2020data}, DrQ~\citep{drq}, DrQ($\epsilon$) (DrQ using the $\epsilon$-greedy parameters in~\citep{DBLP:journals/corr/abs-1812-06110}), CURL~\citep{curl}, OTR\citep{kielak2020recent}, and DER~\citep{van2019use}, all of which were incorporated with Rainbow~\citep{hessel2018rainbow} for policy training. In Figure~\ref{fig:atari_result}, \modelname~attains the highest IQM score of \textbf{0.501} and the lowest OG of \textbf{0.488}, showing the effectiveness of \modelname~in prompting the downstream policy performance. Notably, our approach achieves the highest scores in \textbf{16/26} games, indicating that our approach can indeed improve the perception of the agent by better capturing control-centric information. 
The full scores of \modelname~across the 26 Atari games and more comparisons and
analysis can be found in Appendix~\ref{appsec:allatari}.

% representation models

\paragraph{Results on DMControl with default settings.}
Under default settings, we evaluated the performance of \modelname~against sample-efficient model-free RL methods with an additional focus on effective representation learning of states/observations, 
such as CURL, DrQ, PlayVirtual, MLR, and SimSR. As shown in Table~\ref{tab:dmc_default}, %in terms of the performance at 500k interaction steps, 
\modelname~surpasses previous methods on DMControl-500k across all representative tasks. %, highlighting the efficacy of our proposed method. 
For challenging tasks with sparse rewards, such as \textit{Ball in cup Catch}, \textit{Cartpole Swingup Sparse}, \textit{Finger turn easy}, \textit{Finger turn hard}, and \textit{Pendulum, Swingup}, the effectiveness of \modelname~in complex environments further underscores our method's ability to capture agent dynamics by focusing on reward and temporal information. Regarding sample efficiency, the results of DMControl-100k and the learning curves are provided in Appendix~\ref{appsec:dmc}.

\begin{table}[]
\belowrulesep=0pt
\aboverulesep=0pt
\caption{Results (mean $\pm$ std) on the %DMControl-100k and 
DMControl-500k benchmarks with \textbf{default} settings.}
\centering
\label{tab:dmc_default}
\scalebox{0.8}{
\begin{tabular}{cccccc|c}
\toprule
\textbf{DMControl-500k} & \textbf{CURL} & \textbf{DrQ} & \textbf{PlayVirtual} & \textbf{MLR} & \textbf{SimSR} & \textbf{Ours} \\ \midrule

\highlight{Ball in cup, Catch} & 950$\pm$38 & 965$\pm$17 & 976$\pm$16 & 975$\pm$6 & 951$\pm$26 & \highlight{\textbf{982$\pm$9}} \\ 
Cartpole, Swingup & 822$\pm$67 & 864$\pm$35 & 874$\pm$17 & 875$\pm$11 & 846$\pm$49 & \textbf{883$\pm$26} \\
\highlight{Cartpole, Swingup Sparse} & 0$\pm$0 & 0$\pm$0 & 112$\pm$9 & 67$\pm$27 & 103$\pm$59 & \highlight{\textbf{518$\pm$45}} \\
Cheetah, Run & 555$\pm$110 & 663$\pm$54 & 729$\pm$30 & 697$\pm$56 & 725$\pm$59 & \textbf{748$\pm$44} \\
Finger, Spin & 920$\pm$41 & 934$\pm$131 & 965$\pm$40 & 969$\pm$28 & 964$\pm$20 & \textbf{971$\pm$26} \\
\highlight{Finger, Turn Easy}  & 293$\pm$17 & 365$\pm$21 & 339$\pm$25 & 374$\pm$32 & 435$\pm$14 & \highlight{\textbf{652$\pm$34}}\\
\highlight{Finger, Turn Hard} & 91$\pm$19 & 138$\pm$31 & 194$\pm$42 & 201$\pm$28 & 239$\pm$16 & \highlight{\textbf{328$\pm$35}} \\
Hopper, Hop & 12$\pm$8 & 116$\pm$78 & 133$\pm$29 & 134$\pm$8 & 200$\pm$29 & \textbf{233$\pm$13} \\
Hopper, Stand & 640$\pm$110 & 809$\pm$66 & 896$\pm$36 & 901$\pm$34 & 858$\pm$68 & \textbf{927$\pm$18} \\
\highlight{Pendulum, Swingup} & 242$\pm$36 & 345$\pm$25 & 381$\pm$38 & 434$\pm$27 & 446$\pm$8 & \highlight{\textbf{458$\pm$7}} \\
Walker, Walk & 909$\pm$48 & 910$\pm$73 & 934$\pm$49 & 928$\pm$33 & 935$\pm$4 & \textbf{941$\pm$21} \\
%\midrule
%Mean &  \\
%Median &  \\ 
\bottomrule
\end{tabular}}
\end{table}

\subsection{Can \modelname~capture control-centric information?}

In our pursuit to extract control-centric insights from visually noisy real-world signals, we evaluated the performance of \modelname~in the environments with background distractions. %we compared \modelname~with MLR and SimSR. 
Real-world visual data often contains redundancy and control-irrelevant elements, which motivated our investigation. 
Table~\ref{tab:dmc_unseen} summarizes our findings, revealing that MLR's performance deteriorates in the presence of strong distractions, while SimSR fares better but still experiences a decline. In contrast, \modelname~maintains remarkable stability across tasks, particularly excelling in sparse reward environments such as \textit{Cartpole Swingup Sparse}, \textit{Finger turn easy/hard}, and \textit{Pendulum, Swingup}. The results suggest that \modelname~effectively filters out task-irrelevant information when processing complex visual inputs.

\begin{wrapfigure}{rt}{0.45\textwidth}
  \begin{center}
  \vspace{-1.8em}
    \includegraphics[width=0.78\linewidth]{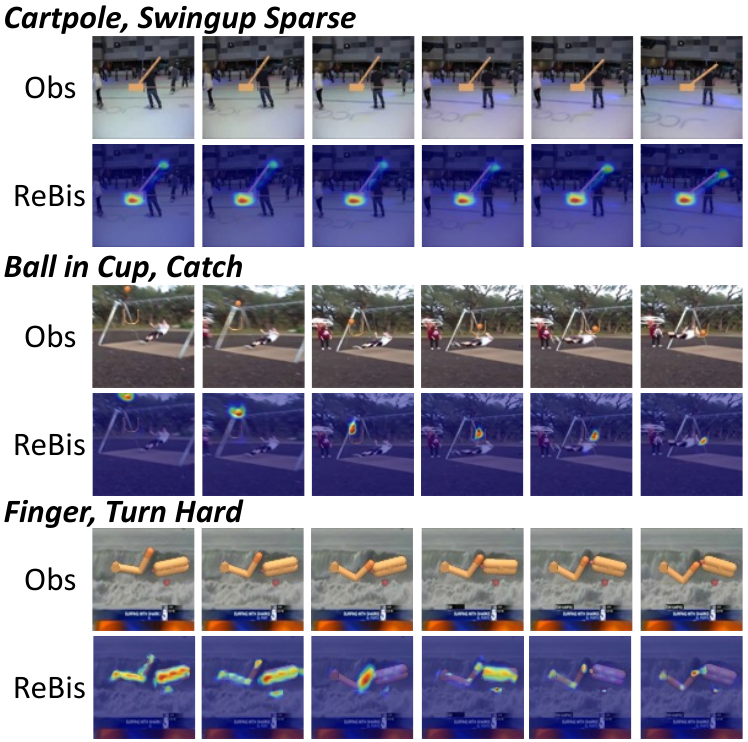}
     \caption{The feature visualization of our learned representations using Grad-CAM.
\label{fig:heatmap}}
  \end{center}
%   \caption{The feature visualization of our learned representations using Grad-CAM.
% \label{fig:heatmap}}
\end{wrapfigure}

To ascertain the extent of our model's capability in filtering background redundancy and focusing on control-centric features, we employed the Grad-CAM~\citep{selvaraju2017grad} for feature visualization. This approach allowed us to delve into the inner workings of \modelname~and gain insights into its effectiveness in capturing task-relevant information and extracting pertinent features. Our analysis was conducted on three sparse environments of varying difficulty levels of background distractions.

%In Figure~\ref{fig:heatmap}, the heatmaps unveiled through Grad-CAM illustrate \modelname's success in removing background noise and provide evidence of its active identification and capture of salient features directly related to control. This validation reinforces our confidence in \modelname's aptitude for extracting and emphasizing control-centric information from noisy visual inputs.

The heatmaps shown in Figure~\ref{fig:heatmap}, generated using Grad-CAM, demonstrate that \modelname~is able to reduce background noise and identify features relevant to control. This observation validates that \modelname~can effectively extract control-centric information from visual inputs containing noise.

\begin{table}[]
\belowrulesep=0pt
\aboverulesep=0pt
\caption{Results (mean $\pm$ std) on the 
DMControl-500k with \textbf{unseen} background distractions, i.e., training the agent on the default setting and evaluating it on tasks with natural video distractions.}
\centering
\label{tab:dmc_unseen}
\scalebox{0.8}{
\begin{tabular}{cccccc|c}
\toprule
\textbf{DMControl-unseen} & \textbf{CURL} & \textbf{DrQ} & \textbf{PlayVirtual} & \textbf{MLR} & \textbf{SimSR} & \textbf{Ours} \\ \midrule
\highlight{Ball in cup, Catch} & 316$\pm$92 & 318$\pm$75 & 815$\pm$102 & 832$\pm$76 & 894$\pm$35 & \highlight{\textbf{970$\pm$16}} \\
Cartpole, Swingup & 335$\pm$17 & 363$\pm$39 & 662$\pm$152 & 845$\pm$36 & 697$\pm$73 & \textbf{859$\pm$21}\\
\highlight{Cartpole, Swingup Sparse} & 0$\pm$0 & 0$\pm$0 & 22$\pm$2 & 21$\pm$2 & 25$\pm$3 & \highlight{\textbf{216$\pm$15}} \\
Cheetah, Run & 162$\pm$16 & 266$\pm$35 & 539$\pm$25 &  401$\pm$14 & 602$\pm$35 & \textbf{712$\pm$30} \\
Finger, Spin & 396$\pm$29 & 404$\pm$17 & 763$\pm$52 & 882$\pm$13 & 563$\pm$69 & \textbf{893$\pm$35}\\
\highlight{Finger, Turn Easy} & 2$\pm$1 & 15$\pm$3 & 104$\pm$22 & 292$\pm$51 & 376$\pm$18 & \highlight{\textbf{559$\pm$42}}\\
\highlight{Finger, Turn Hard} & 0$\pm$0 & 0$\pm$0 & 106$\pm$15 & 184$\pm$32 & 182$\pm$29 & \highlight{\textbf{236$\pm$25}}\\
Hopper, Hop & 9$\pm$3 & 14$\pm$4 & 27$\pm$7 & 20$\pm$6 & 135$\pm$25 & \textbf{145$\pm$28} \\ 
Hopper, Stand & 319$\pm$176 & 423$\pm$95 & 473$\pm$63 & 794$\pm$62 & 505$\pm$112 & \textbf{881$\pm$26}\\
\highlight{Pendulum, Swingup} & 27$\pm$8 & 41$\pm$13 & 123$\pm$19 & 190$\pm$11 & 204$\pm$39 & \highlight{\textbf{255$\pm$14}}\\
Walker, Walk & 502$\pm$75 & 616$\pm$48 & 595$\pm$17 & 884$\pm$25 & 673$\pm$18 & \textbf{893$\pm$41}\\

\bottomrule
\end{tabular}}
\end{table}

\subsection{Ablation Study}

Our aim is to find the optimal balance and understand the contributions of different learning objectives and mask ratios. We explored the impact of fixed learning objectives across varying mask ratios. We also examined the final results with a fixed mask ratio while learning the two objectives separately. Additionally, we adjusted the proportion of the two components under different mask ratios. These three ablation experiments were pivotal in dissecting the complex interactions between our learning objectives and mask ratios. They offered insights into these factors' individual and combined effects on model performance.  Detailed experimental results and analysis can be found in Appendix~\ref{appsec:ext_ablation}.

\section{Related Work}

\begin{table}[tb]
\caption{\textbf{Overview of Properties} of prior approaches on model-free representation learning in RL. The comparison to \modelname~ aims to be as generous as possible to the baselines. \xmark\ indicates a known counterexample for a given property. We compare four different properties. }
\label{tab:related_works}
\centering
\resizebox{\linewidth}{!}{%
\begin{tabular}{l|c c c c c c c c}
\hline
\makecell{Algorithms} & \multicolumn{1}{l}{\begin{tabular}[c]{@{}l@{}}DrQ\end{tabular}} & \multicolumn{1}{l}{\begin{tabular}[c]{@{}l@{}}\textsc{CURL}\end{tabular}} & \multicolumn{1}{l}{\begin{tabular}[c]{@{}l@{}}PlayVirtual\end{tabular}} & \multicolumn{1}{l}{\begin{tabular}[c]{@{}l@{}}MLR\end{tabular}} & \multicolumn{1}{l}{\begin{tabular}[c]{@{}l@{}}SimSR\end{tabular}}  &
\multicolumn{1}{l}{\begin{tabular}[c]{@{}l@{}}Ours\end{tabular}} \\ 
\hline
\makecell{Exogenous Invariant} & \xmark & \xmark & \xmark &
\xmark & \bluecheck & \bluecheck \\
\makecell{Reward Aware} & \xmark & \xmark & \xmark &
\xmark & \bluecheck & \bluecheck \\
\makecell{Dynamics Recovery} & \xmark & \xmark & \bluecheck & \bluecheck & \bluecheck & \bluecheck  \\
\makecell{Feasibility to sparse reward tasks} & \bluecheck & \bluecheck & \bluecheck & \bluecheck & \xmark & \bluecheck   \\
\hline
\end{tabular}
}
\vspace{-1.em}
\end{table}
In RL, the goal of effective state representation learning is to learn a mapping function that translates rich, high-dimensional observations into a compact latent space. Recent research has explored representation learning in RL from various perspectives. A prevalent approach, CURL~\citep{curl}, learns a representation that is invariant to a class of data augmentations. However, it fails to capture either control-centric information or reward-relevant knowledge. Similarly, DrQ~\citep{drq}, which heavily relies on data augmentation strategies, struggles to account for exogenous noise. Self-supervised objectives, based on visual input and sequential interaction, have been introduced by %SPR~\citep{schwarzer2020data} and 
PlayVirtual~\citep{yu2021playvirtual}. 
Recently, mask-based methods~\citep{DBLP:conf/corl/SeoHLLJLA22,mlr}, which have been proposed to reduce spatiotemporal redundancy in particular, recover latent dynamics by constructing a transformer model. However, these methods consistently overlook the importance of reward signals. In contrast, bisimulation-based methods, such as ~\citet{castro2021mico,zang2022simsr}, are fully reward-aware, but may disregard critical spatiotemporal information. Although this information is not directly related to rewards, it is essential for control determination in environments with uninformative rewards. In contrast to these methods, \modelname~addresses these shortcomings by learning control-centric representations while maintaining reward awareness, and effectively eliminating spatiotemporal redundancy. As a result, our method is robust to exogenous factors. Table~\ref{tab:related_works} presents a comprehensive overview of these representative prior approaches, along with their capacities to verify various properties from four perspectives. We also provide more detailed related work in Appendix~\ref{appsec:add_related_work}.

\section{Discussion}
\paragraph{Limitations and Future Work.} 
One potential limitation of our approach is its time complexity during deployment, as it includes transformer architecture in the module, similar to the previous state-of-the-art methods such as MLR~\citep{mlr} (Time complexity comparison can be found in Appendix~\ref{appsec:Hardware}). 
Although our method can be incorporated into any RL algorithm, it requires reconstructing the trajectory (or sub-trajectory) during each update procedure. Fortunately, due to the parallel computing capabilities of transformers, the time complexity can be minimized. An alternative way to address this issue is by applying our approach to Offline settings, allowing for the pretraining of the encoder. This enables the separation of policy training and representation learning.%, allowing the time-consuming encoder learning process to be completed in advance. %Consequently, this improves the overall efficiency of our method without sacrificing its effectiveness.

\paragraph{Conclusion.}
In this paper, we analyze the bound and the potential harm of the previous objectives that follows bisimulation principles, emphasizing the necessity for a highly expressive dynamics model and spatiotemporal knowledge in sparse reward environments. Therefore, we present \modelname~as an effective way of learning state representations tailored to vision-based RL. The proposed framework covers both desirable properties, including spatial-temporal consistency and long-term behavior similarity. The empirical results demonstrate the superiority of the representations produced by \modelname.
\clearpage

\bibliography{iclr2024_conference}
\bibliographystyle{iclr2024_conference}

\clearpage
\appendix

\section{Additional Related Work}
\label{appsec:add_related_work}

The success of deep RL in many fields is due
in part to the utilization of the state representation learning. Traditionally, state representation learning has been framed as learning state abstractions / aggregations~\citep{DBLP:conf/aaai/AndreR02, DBLP:conf/uai/FernsCPP06, DBLP:conf/icml/MannorMHK04, DBLP:conf/qest/ComaniciPP12}, and most of these methods aim to reduce the original state space size and to minimize the system complexity. Recent studies present promising results in learning robust representations that can then be used to accelerate policy learning in pixel-based observation spaces. \citet{ DBLP:conf/nips/LeeNAL20, DBLP:conf/aaai/Yarats0KAPF21} propose to train deep auto-encoders with a reconstruction loss to learn compact representations, \citet{DBLP:conf/iclr/ShelhamerMAD17, DBLP:conf/icml/HafnerLFVHLD19} learn state representations from predictive losses to maintain trajectory consistency, \citet{DBLP:journals/corr/abs-1807-03748, curl, DBLP:conf/icml/StookeLAL21} use contrastive losses as auxiliary tasks to improve performance, \citet{DBLP:journals/tcyb/XuHGP14, DBLP:journals/corr/KrishnamurthyLK16, DBLP:conf/icml/YaratsFLP21} develop state clustering methods to improve sample-efficiency, and finally \citet{DBLP:conf/iclr/0001MCGL21, castro2021mico,zang2022simsr} measure state distance by bisimulation similarity to shape state representations.

\section{Proof}
\label{appsec:proof}

\setcounter{theorem}{1}
\begin{theorem}[Boundedness Condition for Convergence]
Assume $\mathcal{S}$ is compact and we have approximate dynamics $\hat{P}$, with its support being a closed subset of $\mathcal{S}$. Then, a unique bisimulation measurement $d_\sim$ of the form given in Equation~\ref{eq:new_bisim} exists, and this measurement is bounded:
\begin{align}
\label{eq:boundedness}
    \mathrm{supp}(\hat{P}) \subseteq \mathcal{S} \Rightarrow \mathrm{diam}(\mathcal{S}; d_\sim) \leq \frac{1}{1 - \gamma}(R_{\mathrm{max}} - R_{\mathrm{min}}).
\end{align}
where $\mathrm{diam}$ is Diameter of $\mathcal{S}$.
\end{theorem}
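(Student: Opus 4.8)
The plan is to exhibit $\mathcal{F}^\pi$ (as given in Equation~\ref{eq:new_bisim}, with the expectation over $\hat{P}$) as a contraction on a complete metric space of bounded pseudometrics, apply the Banach fixed-point theorem, and then bound the diameter of the unique fixed point. First I would fix the ambient space $\mathbb{M}$ to be the set of bounded pseudometrics $d:\mathcal{S}\times\mathcal{S}\to[0,\infty)$ equipped with the sup-metric $\|d_1-d_2\|_\infty=\sup_{s_i,s_j}|d_1(s_i,s_j)-d_2(s_i,s_j)|$; since $\mathcal{S}$ is compact and $\mathrm{supp}(\hat{P})$ is a closed (hence compact) subset of $\mathcal{S}$, this space is complete. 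I would check that $\mathcal{F}^\pi$ maps $\mathbb{M}$ into itself: the reward term $|r^\pi_{s_i}-r^\pi_{s_j}|$ is bounded by $R_{\max}-R_{\min}$, and the term $\gamma\,\mathbb{E}_{u\sim\hat{P}^\pi_{s_i},v\sim\hat{P}^\pi_{s_j}}[\bar d(u,v)]$ (equivalently the Wasserstein term, which is dominated by this coupling) is bounded because $\bar d$ is bounded and the relevant expectations are over the compact support; symmetry and the triangle inequality are inherited from $\bar d$, and $\mathcal{F}^\pi(d)(s,s)=0$.

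The contraction step is the crux. For $d_1,d_2\in\mathbb{M}$, the reward terms cancel, so
\begin{align}
\big|\mathcal{F}^\pi(d_1)(s_i,s_j)-\mathcal{F}^\pi(d_2)(s_i,s_j)\big|
= \gamma\,\Big|\mathbb{E}_{u,v}\big[d_1(u,v)-d_2(u,v)\big]\Big|
\leq \gamma\,\|d_1-d_2\|_\infty,
\end{align}
where the expectation is under the product coupling $u\sim\hat P^\pi_{s_i}$, $v\sim\hat P^\pi_{s_j}$ (for the Wasserstein formulation one instead takes the optimal coupling and uses that it is a valid coupling for both $d_1$ and $d_2$). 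Taking the sup over $s_i,s_j$ gives $\|\mathcal{F}^\pi(d_1)-\mathcal{F}^\pi(d_2)\|_\infty\leq\gamma\|d_1-d_2\|_\infty$, and since $0<\gamma<1$, Banach's theorem yields a unique fixed point $d_\sim\in\mathbb{M}$.

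For the diameter bound, I would run the iteration $d^{(n+1)}=\mathcal{F}^\pi(d^{(n)})$ from $d^{(0)}\equiv 0$ and show by induction that $\|d^{(n)}\|_\infty\leq(R_{\max}-R_{\min})\sum_{k=0}^{n-1}\gamma^k$: the base case is trivial, and the inductive step uses $|r^\pi_{s_i}-r^\pi_{s_j}|\leq R_{\max}-R_{\min}$ together with $\gamma\,\mathbb{E}_{u,v}[d^{(n)}(u,v)]\leq\gamma\|d^{(n)}\|_\infty$. Passing to the limit $n\to\infty$ and using uniqueness of the fixed point gives $\|d_\sim\|_\infty\leq\frac{1}{1-\gamma}(R_{\max}-R_{\min})$, which is exactly $\mathrm{diam}(\mathcal{S};d_\sim)$.

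\textbf{Main obstacle.} The delicate point is the well-definedness and boundedness of the transition term when $\hat P$ is only an \emph{approximate} dynamics model rather than a genuine stochastic kernel on $\mathcal{S}$: this is precisely why the hypothesis $\mathrm{supp}(\hat P)\subseteq\mathcal{S}$ with closed support is needed. One must verify that the pushed-forward distributions stay on the compact set $\mathcal{S}$ so that $\bar d$ is evaluated only where it is bounded, and that the coupling argument in the contraction step survives when $\bar d$ (cosine distance) is merely a bounded pseudometric and not a true metric — here the fact that $\bar d$ is symmetric, nonnegative, and bounded by a constant (at most $2$, or normalized to a convenient range) is what makes the Wasserstein/coupling estimate go through. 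If compactness of $\mathrm{supp}(\hat P)$ fails, the expectations can be unbounded and the whole argument — existence, uniqueness, and the diameter bound — breaks, which is the point the surrounding discussion is making.
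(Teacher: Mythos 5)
Your proposal is correct and follows the same overall strategy as the paper: show $\mathcal{F}^\pi$ is a $\gamma$-contraction in the sup-norm (the paper does this exactly as you do, with the reward terms cancelling and the transition term contributing at most $\gamma\|d-d'\|_\infty$), invoke Banach for existence and uniqueness, and then bound the fixed point. The only substantive difference is in the diameter step. The paper stays at the fixed point: it invokes Lemma~5 of \citet{DBLP:conf/nips/KemertasA21}, which says that under $\mathrm{supp}(\hat P)\subseteq\mathcal{S}$ the lifted distance between any two pushed-forward distributions is at most $\mathrm{diam}(\mathcal{S};d_\sim)$, and then solves the self-referential inequality $\mathrm{diam}(\mathcal{S};d_\sim)\le (R_{\max}-R_{\min})+\gamma\,\mathrm{diam}(\mathcal{S};d_\sim)$. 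You instead run the Picard iteration from $d^{(0)}\equiv 0$ and bound the iterates by the geometric series $(R_{\max}-R_{\min})\sum_k\gamma^k$, passing to the limit. These are essentially equivalent, but your route is more self-contained (no external lemma) and it delivers finiteness of $\sup d_\sim$ for free, whereas the paper's inequality-rearrangement implicitly presupposes that the diameter is finite, which is only justified because the fixed point lives in a space of bounded functions — a point your iteration makes explicit.

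One small caveat, which affects your write-up and the paper's equally: with the sample-based (product-coupling) form of the operator actually used in Equation~\ref{eq:formal_bisim}, the self-distance $\mathcal{F}^\pi(d)(s,s)=\gamma\,\mathbb{E}_{u,v\sim \hat P^\pi_s\times \hat P^\pi_s}[d(u,v)]$ need not vanish, so the fixed point is a diffuse (MICo-style) measurement rather than a pseudometric with zero self-distance. Your self-map check should therefore be stated for bounded symmetric functions satisfying the triangle inequality rather than for pseudometrics; nothing else in your contraction or boundedness argument changes.
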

\begin{proof}
This mimics the proof of Lemma 1 in ~\citet{zang2022simsr}, except it replaces $P$ with an approximate dynamics model $\hat{P}$. First, let $\phi,\phi':\mathcal{O}\rightarrow\mathcal{S}$,  denote $\hat{d}(\phi(o_i),\phi(o_j))$ as $d(s_i,s_j)$, and $\hat{d}(\phi'(o_i),\phi'(o_j))$ as $d'(s_i,s_j)$, then we have:
\begin{equation}
    \mathcal{F}^{\pi}d(s_i,s_j) - \mathcal{F}^{\pi}d'(s_i,s_j) \leq \gamma || d-d' ||_\infty,
\end{equation}
when considering the transition as a stochastic matrix. This implies $\mathcal{F}^{\pi}$ is a $\gamma$-contraction. Then, as noted in ~\citet{DBLP:conf/fossacs/ChenBW12,DBLP:conf/uai/FernsP14}, the set of couplings for two probability functions forms a polytope while the Wessertain distance optimizes a linear function over vertices of the polytope. Follow Lemma 5 in ~\citet{DBLP:conf/nips/KemertasA21}, we have:
\begin{equation}
    \operatorname{supp}(\hat{P}) \subseteq \mathcal{S} \Rightarrow \sup _{\mathbf{s}_i, \mathbf{s}_j \in \mathcal{S} \times \mathcal{S}} d_\sim\left(\hat{P}^\pi_{s_i},\hat{P}^\pi_{s_j}\right) \leq \operatorname{diam}\left(\mathcal{S} ; d_\sim\right), \forall p \geq 1.
\end{equation}
We further assume that the reward function is an unbiased estimator over the latent states, \textit{i.e.}, $r_{o_i}^{\pi}=r_{s_i}^{\pi}$, where $s_i=\phi(o_i)$. Then, by integrating Equation~\ref{eq:new_bisim}, for the fixed point, we have
    \begin{equation}
    \begin{aligned}
 d_\sim(s_i,s_j) &= |r_{s_i}^{\pi} - r_{s_j}^{\pi}| + \gamma d_\sim\left(\hat{P}^\pi_{s_i},\hat{P}^\pi_{s_j}\right),\\
 &\leq (R_\text{max}-R_\text{min}) + \gamma \operatorname{diam}\left(\mathcal{S} ; d_\sim\right),
 \end{aligned}
\end{equation}
which implies,
\begin{equation}
\begin{aligned}
    \operatorname{diam}\left(\mathcal{S} ; d_\sim\right) &\leq (R_\text{max}-R_\text{min}) + \gamma \operatorname{diam}\left(\mathcal{S} ; d_\sim\right)
    \\ &\leq \frac{1}{1-\gamma} (R_\text{max}-R_\text{min}).
\end{aligned}
\end{equation}
\end{proof}

\begin{theorem}
\label{app:thm:approx_error}
    Define $\mathcal{E}_{P}:=\sup _{s \in \mathcal{S}} W_1\left(d_\sim\right)\left(P^\pi_s, \hat{P}^\pi_s\right)$. Then $\left\|d_\sim-\tilde{d}_\sim\right\|_{\infty} \leq \frac{2}{1-\gamma} \mathcal{E}_{P}$, where $\tilde{d}_\sim$ is the approximate fixed point. 
\end{theorem}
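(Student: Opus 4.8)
\textbf{Proof plan for Theorem~\ref{app:thm:approx_error}.}

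The plan is to compare the two contraction operators acting on the (pseudo)metric space $\mathcal{M}$: the ``true'' operator $\mathcal{F}^\pi$ built from the real dynamics $P$, and the approximate operator $\hat{\mathcal{F}}^\pi$ built from $\hat{P}$, where $d_\sim$ is the fixed point of $\mathcal{F}^\pi$ and $\tilde{d}_\sim$ is the fixed point of $\hat{\mathcal{F}}^\pi$. The standard ``perturbed fixed point'' lemma for contraction mappings says that if $\mathcal{F}^\pi$ is a $\gamma$-contraction in $\|\cdot\|_\infty$ with fixed point $d_\sim$, and $\hat{d}_\sim$ is a fixed point of a second operator $\hat{\mathcal{F}}^\pi$, then
\begin{equation}
\|d_\sim - \tilde{d}_\sim\|_\infty \;\leq\; \frac{1}{1-\gamma}\,\sup_{d}\bigl\|\mathcal{F}^\pi d - \hat{\mathcal{F}}^\pi d\bigr\|_\infty .
\end{equation}
Instantiating this at $d = \tilde{d}_\sim$ suffices, so the real work is to bound $\|\mathcal{F}^\pi \tilde{d}_\sim - \hat{\mathcal{F}}^\pi \tilde{d}_\sim\|_\infty$.

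First I would write out both operators at an arbitrary state pair $(s_i,s_j)$. Since the reward term $|r^\pi_{s_i} - r^\pi_{s_j}|$ is identical in both, it cancels, leaving only the difference of the transition terms:
\begin{equation}
\bigl|\mathcal{F}^\pi d(s_i,s_j) - \hat{\mathcal{F}}^\pi d(s_i,s_j)\bigr|
= \gamma\,\bigl| W_1(d)(P^\pi_{s_i},P^\pi_{s_j}) - W_1(d)(\hat{P}^\pi_{s_i},\hat{P}^\pi_{s_j})\bigr|.
\end{equation}
Then I would apply the triangle inequality for the Wasserstein distance (with respect to the base distance $d$) to split this into two terms, $W_1(d)(P^\pi_{s_i},\hat{P}^\pi_{s_i})$ and $W_1(d)(P^\pi_{s_j},\hat{P}^\pi_{s_j})$, each of which is bounded by $\mathcal{E}_P$ once we take the fixed point $d = d_\sim$ (or, carefully, after we argue the $\mathcal{E}_P$ bound controls the relevant quantity uniformly). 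This gives the factor of $2$: $\sup_{d}\|\mathcal{F}^\pi d - \hat{\mathcal{F}}^\pi d\|_\infty \leq 2\gamma\,\mathcal{E}_P$, and combining with the perturbation lemma yields $\|d_\sim - \tilde{d}_\sim\|_\infty \leq \frac{2\gamma}{1-\gamma}\mathcal{E}_P \leq \frac{2}{1-\gamma}\mathcal{E}_P$.

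The main obstacle is a subtlety of bookkeeping rather than of ideas: $\mathcal{E}_P$ is defined using the \emph{fixed point} $d_\sim$ as the ground cost in the Wasserstein distance, whereas the triangle-inequality step above naturally produces Wasserstein distances with ground cost $d = \tilde{d}_\sim$ (the argument of the operators). So I would need to either (i) invoke the perturbation lemma in the form that only ever evaluates $\mathcal{F}^\pi$ and $\hat{\mathcal{F}}^\pi$ at $d = d_\sim$ — i.e. write $\tilde{d}_\sim - d_\sim = \hat{\mathcal{F}}^\pi \tilde{d}_\sim - \mathcal{F}^\pi d_\sim = (\hat{\mathcal{F}}^\pi \tilde{d}_\sim - \hat{\mathcal{F}}^\pi d_\sim) + (\hat{\mathcal{F}}^\pi d_\sim - \mathcal{F}^\pi d_\sim)$, bound the first bracket by $\gamma\|\tilde{d}_\sim - d_\sim\|_\infty$ using that $\hat{\mathcal{F}}^\pi$ is a $\gamma$-contraction, and bound the second bracket by $2\gamma\mathcal{E}_P$ using the definition of $\mathcal{E}_P$ directly — and then rearrange. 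This keeps $d_\sim$ as the ground cost throughout and avoids any mismatch, so it is the cleaner route and essentially mirrors the argument in~\citet{DBLP:conf/nips/KemertasA21}; I would follow that path.
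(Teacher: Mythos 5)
Your proposal is correct, but it takes a different route from the paper. The paper's proof is essentially a one-line specialization: it quotes the general perturbation lemma of \citet{DBLP:conf/nips/KemertasA21}, which bounds $\|d_\sim-\tilde{d}_\sim\|_\infty$ by $\frac{2}{1-\gamma a_p}\mathcal{E}_r+\frac{2\gamma}{1-\gamma a_p}\mathcal{E}_P+\frac{\gamma(a_p-1)}{1-\gamma a_p}\mathrm{diam}(\mathcal{S};d_\sim)$, and then sets $p=1$ (so $a_p=1$ and the diameter term vanishes), implicitly taking $\mathcal{E}_r=0$ since only the dynamics is approximated. You instead reprove the relevant special case from scratch via the standard fixed-point perturbation argument: split $\tilde{d}_\sim-d_\sim=(\hat{\mathcal{F}}^\pi\tilde{d}_\sim-\hat{\mathcal{F}}^\pi d_\sim)+(\hat{\mathcal{F}}^\pi d_\sim-\mathcal{F}^\pi d_\sim)$, use the $\gamma$-contraction of $\hat{\mathcal{F}}^\pi$ on the first bracket, cancel the (shared) reward terms in the second bracket, and apply the triangle inequality for $W_1(d_\sim)$ to get the $2\gamma\mathcal{E}_P$ bound; you correctly flag and resolve the ground-cost bookkeeping issue by evaluating both operators only at $d_\sim$, which is exactly the right fix (your first displayed ``$\sup_d$'' form of the lemma would not interact cleanly with the definition of $\mathcal{E}_P$). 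Your route is self-contained, makes explicit that the rewards are exact (which the paper leaves implicit when it silently drops the $\mathcal{E}_r$ term), and in fact yields the marginally sharper constant $\frac{2\gamma}{1-\gamma}\leq\frac{2}{1-\gamma}$; the paper's route is shorter and inherits the more general $W_p$/approximate-reward statement. One point to make explicit in your write-up: the contraction property and well-posedness of the approximate operator $\hat{\mathcal{F}}^\pi$ (hence existence of its fixed point $\tilde{d}_\sim$) rely on the support condition $\mathrm{supp}(\hat{P})\subseteq\mathcal{S}$ from the boundedness theorem, which is also a hypothesis of the cited lemma, so you should state it as an assumption rather than use it tacitly.
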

\begin{proof}
    First, we have the following lemma:
    
\setcounter{theorem}{3}
    \begin{lemma}~\citep{DBLP:conf/nips/KemertasA21}
    Assume $\mathrm{supp}(\hat{P}) \subseteq \mathcal{S}$
and $1 - \gamma a_p > 0$. 
Then
\begin{equation}
||d_\sim  -  \tilde{d}_\sim||_\infty \leq 
\frac{2}{1 - \gamma a_p} \mathcal{E}_{r} + \frac{2\gamma}{1 - \gamma a_p} \mathcal{E}_{P} + 
\frac{ \gamma[a_p - 1] }{1 - \gamma a_p}
\mathrm{diam}(\mathcal{S};{d}_\sim)
\end{equation}
where $a_p = 2^{(p-1)/p}$ and 
$\mathrm{diam}(\mathcal{S};{d}_\sim) \leq  \frac{1}{1-\gamma}(R_{\mathrm{max}}-R_{\mathrm{min}})$.
    \end{lemma}
Then let $p=1$, we have $a_p=a_1=1$, and we have:
\begin{equation}
    \left\|d_\sim-\tilde{d}_\sim\right\|_{\infty} \leq \frac{2}{1-\gamma} \mathcal{E}_{P}.
\end{equation}
\end{proof}

\setcounter{theorem}{3}
\begin{theorem}
If the reward is constantly zero, there exists a trivial solution for the bisimulation loss where all sample representations are identical, i.e., $\forall\ s_i, s_j \in \mathcal{S},  r^{\pi}_{s_i} = r^{\pi}_{s_j}=0 \Rightarrow d_\sim^{\pi}(s_i,s_j)=0$.
% $\forall\ \pi,s, r^\pi_s=const.$
\end{theorem}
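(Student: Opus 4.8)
The plan is to exhibit the constant-zero representation as a fixed point of the update operator $\mathcal{F}^\pi$ and then invoke uniqueness of the fixed point from the Banach argument used in Theorem~\ref{thm:boundedness}. Concretely, suppose the reward is identically zero, so that $r_{s_i}^\pi = 0$ for every $s_i \in \mathcal{S}$. Consider the candidate pseudometric $d_0$ defined by $d_0(s_i,s_j) = 0$ for all pairs $s_i,s_j$. Plugging this into Equation~\ref{eq:new_bisim} (equivalently Equation~\ref{eq:formal_bisim}), the reward-difference term $|r_{o_i}^\pi - r_{o_j}^\pi|$ vanishes by hypothesis, and the discounted term $\gamma\,\bar d_0(\hat s_{i+1},\hat s_{j+1})$ also vanishes because $\bar d_0 \equiv 0$ on all latent states. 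Hence $\mathcal{F}^\pi d_0 = d_0$, i.e. $d_0$ is a fixed point of $\mathcal{F}^\pi$.

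Next I would argue this fixed point is the one the bisimulation objective drives the representation toward. Since $\mathcal{F}^\pi$ is a $\gamma$-contraction on the (complete) space of bounded pseudometrics — exactly the contraction established in the proof of Theorem~\ref{thm:boundedness} — the Banach fixed-point theorem guarantees $d_\sim^\pi$ is the unique fixed point, so $d_\sim^\pi = d_0$, giving $d_\sim^\pi(s_i,s_j) = 0$ for all $s_i, s_j$. Consequently the behavioral loss $\mathcal{L}_\text{behavior}$ in Equation~\ref{eq:bisimulation_loss} is globally minimized (to value zero) by any encoder $\phi$ that maps all observations to a single point, since then both $\bar d(\hat\phi(o_i),\hat\phi(o_j))$ and $\mathcal{F}^\pi\bar d(\hat\phi(o_i),\hat\phi(o_j))$ equal zero; this is the trivial collapsed solution the statement asserts exists.

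The only real subtlety — and the step I would be most careful about — is the distinction between ``$d_\sim^\pi$ is identically zero'' (a statement purely about the fixed point of the operator) and ``the learned representation collapses'' (a statement about minimizers of the MSE surrogate loss actually used in training). The theorem as stated is the former, which follows immediately from the fixed-point argument above; the collapse of $\phi$ is then the natural corollary, because a constant encoder makes the surrogate loss attain its infimum. I would make sure the write-up is explicit that we are demonstrating \emph{existence} of a trivial solution, not that every optimizer necessarily finds it, so no claim about optimization dynamics is needed. A minor point to check is that the constant pseudometric is admissible in whatever space $\mathcal{M}$ the operator acts on (it trivially satisfies symmetry, the triangle inequality, and $d_0(s,s)=0$), which is immediate. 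Beyond that the argument is essentially a one-line verification plus an appeal to uniqueness already in hand.
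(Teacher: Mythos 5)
Your argument is correct, but it takes a slightly different route from the paper. You exhibit the identically-zero pseudometric $d_0$ as a fixed point of $\mathcal{F}^{\pi}$ under zero rewards and then conclude $d_\sim^{\pi}=d_0$ by uniqueness, which rests on the $\gamma$-contraction and Banach fixed-point argument already established for Theorem~\ref{thm:boundedness}; this is a clean existence-plus-uniqueness verification. The paper instead works directly from the fixed-point equation: it bounds $d_\sim^{\pi}(s_i,s_j)\leq |r^{\pi}_{s_i}-r^{\pi}_{s_j}| + \gamma\max_{s_i',s_j'} d_\sim^{\pi}(s_i',s_j')$, deduces $d_\sim^{\pi}(s_i,s_j)\leq |r^{\pi}_{s_i}-r^{\pi}_{s_j}|/(1-\gamma)$, and then invokes non-negativity of the measurement to force $d_\sim^{\pi}\equiv 0$ when rewards vanish. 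The paper's bound is marginally more informative, since it quantifies how close to collapse the metric is whenever reward differences are merely small (the near-constant-reward regime discussed in the text), whereas your uniqueness argument only addresses the exactly-zero case; on the other hand, your route avoids the supremum manipulation and makes the dependence on the contraction property explicit. Your final caveat distinguishing the statement about the fixed point from the statement about minimizers of the MSE surrogate in Equation~\ref{eq:bisimulation_loss} is well taken and consistent with how the theorem is phrased: both you and the paper prove the former, and the collapse of $\phi$ follows as the trivial global minimizer of the surrogate loss.
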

\begin{proof}
    For the fixed point $d^{\pi}_\sim$, we have:
    \begin{equation}
    \begin{aligned}
            d^{\pi}_{\sim}(s_i,s_j) &= \mathcal{F}^{\pi}d^{\pi}_{\sim}(s_i,s_j)=|r_{s_i}^{\pi} - r_{s_j}^{\pi}| + \gamma d^{\pi}_{\sim}(s_i',s_j')\\
    &\leq |r_{s_i}^{\pi} - r_{s_j}^{\pi}|  + \gamma \max_{s_i',s_j'} d^{\pi}_{\sim}(s_i',s_j').
    \end{aligned}
    \end{equation}
    Accordingly, we have:
    \begin{equation}
        \begin{aligned}
                d^{\pi}_{\sim}(s_i,s_j)\leq \frac{|r_{s_i}^{\pi} - r_{s_j}^{\pi}|}{1-\gamma}.
        \end{aligned}
    \end{equation}
    When the reward is constantly zero, the reward difference $|r_{s_i}^{\pi} - r_{s_j}^{\pi}|$ equals to zero everywhere, and thus we have $d^{\pi}_{\sim}(s_i,s_j)\leq 0$, as the fact that $d^{\pi}_{\sim}(s_i,s_j)\geq 0$ by definition, we finally have $d^{\pi}_{\sim}(s_i,s_j)=0$.
\end{proof}

Here, we give a formal version of Theorem \ref{thm:effective-rank}, and then provide its proof.
Here, to distinguish them, we call the encoder with gradient as online encoder and the momentum encoder without gradient as the target encoder.
\begin{theorem}
For the simplicity of theoretical exposure, we follow the framework in \citet{tian2021understanding,zhuo2023towards} and adopt the following simplified setting for our analysis:
% assumptions on data distribution:
% that  the output of the online branch (with gradient) $\tau^{\hat \phi(o)}_K$, and the output of the target branch (without gradient) $\tau_s$ satisfy (consistent with )
% :
\begin{itemize}
    % \item the correlation matrix 
    \item the encoder network is a linear layer $\hat\tau_x=W_fx,W_f\in\mathbb{R}^{d\times k}$, and the two branches share the same encoder module (ignoring the momentum); 
    \item we consider Gaussian noise as the data augmentation operation, where the original data $\bar x\sim\mathcal{N}(0,D)$ follows a zero-mean Gaussian distribution with a diagonal covariance matrix $D\in\mathbb{R}^{n}$ whose diagonal elements $d_1\geq\dots\geq d_n$. Accordingly, the data is generated from $\mathcal{A}(x|\bar x)\sim\mathcal\mathcal{N}(\bar x,\sigma^2I)$. The augmented data serve as inputs of both branches;
    \item the Transformer module acts as a linear layer, a $\tau_x=W \hat\tau_x,W\in\mathbb{R}^{k\times k}$. 
\end{itemize}
For a specific feature representation, e.g., $\tau_x$, we measure its effective dimensionality by the effective rank of its correlation matrix. Specifically,  the correlation matrix is $C_{\tau_x}=\mathbb{E}_{x}\tau_x\tau_x^\top$ with 
eigendecomposition $C_{\tau_x}=V\Lambda V^\top$, where $V$ is the eigenvector and $\Lambda=\operatorname{diag}(\lambda_1,\dots\lambda_k)$ is the diagonal eigenvalue matrix, and effective rank is defined as the Shannon entropy of the normalized eigenvalues, i.e., 
$\operatorname{erank}(C_{\tau_x})=\mathcal{H}(p)=-\sum_{i=1}^kp_i\log p_i$, where $p_1=\lambda_1/\sum_i\lambda_i$.

Then, after one gradient descent step of the MSE reconstruction loss (Equation~\ref{eq:dis}) from the $t$-th step to the $(t+1)$-th step, the effective feature dimensionality is improved upon during this process, and thus prevent potential feature collapse. Formally,
\begin{equation}
    \operatorname{erank}(C^{(t+1)}_{\tau_x})>\operatorname{erank}(C^{(t)}_{\tau_x}).
\end{equation}
% Denote the 
% The predictor 
\end{theorem}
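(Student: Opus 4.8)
The plan is to analyze one gradient step of the MSE reconstruction loss $\mathcal{L}_\text{reconstruction} = \mathbb{E}_x \|\tau_x - \hat\tau_x^{\text{target}}\|^2$ in the simplified linear setting, and to track how the correlation matrix $C_{\tau_x}^{(t)} = \mathbb{E}_x \tau_x \tau_x^\top$ evolves. First I would write down the closed form of the loss. Since $\bar x \sim \mathcal{N}(0,D)$ and the augmentation adds independent noise $\sigma^2 I$, the input covariance is $\Sigma := D + \sigma^2 I$, and the online branch produces $\tau_x = W W_f x$ while the (stop-gradient) target branch produces $\hat\tau_x^{\text{target}} = W_f x$ (the momentum encoder is ignored, so the target is just the online encoder's output without the transformer, treated as a constant). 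Expanding the loss gives a quadratic form in $M := W W_f$ and $W_f$, namely $\mathcal{L} = \operatorname{Tr}\big((M - W_f)\Sigma(M - W_f)^\top\big)$ (up to cross terms that vanish by the stop-gradient), so the gradient with respect to $W$ (and $W_f$) is explicit and linear. This reduces the whole question to linear-algebra dynamics on the matrices $W$ and $W_f$.

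Next I would compute $C_{\tau_x}^{(t+1)}$ in terms of $C_{\tau_x}^{(t)}$. Writing the gradient-descent update $W^{(t+1)} = W^{(t)} - \eta \nabla_W \mathcal{L}$, $W_f^{(t+1)} = W_f^{(t)} - \eta \nabla_{W_f}\mathcal{L}$, and keeping terms to first order in the learning rate $\eta$, the perturbation of $C_{\tau_x} = M \Sigma M^\top$ is a symmetric matrix that I would express in the eigenbasis $V$ of $C_{\tau_x}^{(t)}$. The key structural fact to extract, following \citet{tian2021understanding,zhuo2023towards}, is that the reconstruction objective pushes $M$ toward $W_f$, i.e., it amplifies the smaller singular directions of the composed map relative to the already-large ones; concretely, the eigenvalues $\lambda_i$ of $C_{\tau_x}^{(t)}$ get updated by $\lambda_i^{(t+1)} = \lambda_i^{(t)} + \eta \, g_i + O(\eta^2)$ where the correction $g_i$ is larger (more positive, or less negative) for small $\lambda_i$ than for large $\lambda_i$ — this is the "anti-collapse" pressure of matching the identity-like target through the transformer bottleneck. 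I would make this monotonicity precise by showing $g_i/\lambda_i$ is decreasing in $\lambda_i$, or equivalently that the update contracts the spread of the normalized spectrum $p_i = \lambda_i / \sum_j \lambda_j$ toward uniform.

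Finally I would translate spectral flattening into a strict increase of the effective rank. Since $\operatorname{erank}(C_{\tau_x}) = \mathcal{H}(p) = -\sum_i p_i \log p_i$ is the Shannon entropy of the normalized eigenvalue distribution, and entropy is strictly Schur-concave, it suffices to show that $p^{(t+1)}$ is majorized by $p^{(t)}$ (and not equal to it, away from the uniform fixed point). The monotonicity of $g_i/\lambda_i$ established in the previous step is exactly the condition that the map $\lambda_i \mapsto \lambda_i^{(t+1)}$ is order-preserving and compressive in the relative sense, which yields the majorization $p^{(t+1)} \prec p^{(t)}$; strict Schur-concavity of $\mathcal{H}$ then gives $\mathcal{H}(p^{(t+1)}) > \mathcal{H}(p^{(t)})$ for sufficiently small $\eta$, i.e., $\operatorname{erank}(C_{\tau_x}^{(t+1)}) > \operatorname{erank}(C_{\tau_x}^{(t)})$, as claimed.

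The main obstacle I anticipate is the second step: getting a clean, sign-definite expression for the first-order change $g_i$ of each eigenvalue and verifying the crucial monotonicity $g_i/\lambda_i$ decreasing in $\lambda_i$. This requires care because (i) both $W$ and $W_f$ move under the gradient step so the update to $M = W W_f$ has two contributions that must be combined, (ii) the target branch being a stop-gradient copy of $W_f$ means the "desired" direction itself shifts, and (iii) first-order eigenvalue perturbation theory only controls $\lambda_i^{(t+1)}$ up to $O(\eta^2)$, so I must argue the $\eta$-linear term strictly dominates — which forces the smallness assumption on $\eta$ and the non-degeneracy ("mild data assumptions") on $D$ that rules out the already-collapsed or already-uniform fixed point. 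Handling the degenerate/boundary cases (repeated eigenvalues, $\lambda_i = 0$) and confirming that the inequality is strict rather than weak is where most of the technical work lies; the entropy/majorization wrapper in step three is then essentially a citation to standard Schur-convexity theory.
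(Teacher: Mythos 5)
Your overall strategy --- reduce to linear dynamics, track the first-order change of the eigenvalues of $C_{\tau_x}$, show the spectrum flattens, and convert flattening into an entropy increase via majorization and Schur-concavity --- is the same in spirit as the paper's argument (which itself leans on \citet{zhuo2023towards}). But there is a genuine gap at exactly the step you flag as the "main obstacle," and it is not a technicality: the sign-definite monotonicity of the per-eigenvalue increments ($g_i/\lambda_i$ decreasing in $\lambda_i$, hence $p^{(t+1)}\prec p^{(t)}$) is the entire content of the theorem, and you assert it as "anti-collapse pressure" rather than derive it --- which is circular, since preventing collapse is what is to be proved. The paper closes this step by (i) invoking the eigenspace-alignment result (Lemma 1 of \citet{zhuo2023towards}) showing the online and target features share the eigenbasis $V$ throughout training --- something you implicitly assume when you diagonalize the update in a fixed $V$, but which itself needs proof --- and (ii) deriving the closed-form predictor $W=VSV^\top$ with $s_i=\sqrt{d_i/(d_i+\sigma)}$, i.e.\ a strictly low-pass spectral filter, after which the erank increase follows from Theorem~4 of \citet{zhuo2023towards}.

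Moreover, your proposed simplification actively removes the mechanism that produces that low-pass structure. By feeding the \emph{same} augmented input $x$ to both branches and declaring that "cross terms vanish by the stop-gradient," you reduce the loss to $\operatorname{Tr}\bigl((M-W_f)\Sigma(M-W_f)^\top\bigr)$ with $M=WW_f$, whose minimizer in $W$ is simply the identity on the feature span. The Wiener-type shrinkage $s_i=\sqrt{d_i/(d_i+\sigma)}$ --- and hence the relative amplification of small eigendirections that drives the majorization --- arises precisely from the mismatch between the two branches (the online branch sees the noisy/masked input while the target sees the clean one, so the noise variance $\sigma^2$ enters the optimal predictor). With identical inputs the flattening has no established driving force, so your Step~2 cannot be completed as stated, and Step~3 (the Schur-concavity wrapper, which is fine) has nothing to act on. To repair the proposal, keep the noise asymmetry between the branches, establish (or cite) the common-eigenbasis lemma, derive the spectral-filter form of $W$, and only then run your perturbation-plus-majorization argument --- at which point you have essentially reconstructed the paper's citation-based proof.
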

\begin{proof}
Here, we outline the main steps of the proof. First, following Lemma 1 from \citet{zhuo2023towards}, it is easy to show that the online output features $\tau_x$ and the target output features $\hat\tau_x$ share the same eigenspace $V$ during the reconstruction loss. This fact means that \emph{the proposed Transformer module essentially behaves as a low-pass filter} that is learned to reconstruct the original inputs from the masked noisy ones. Formally, we can derive that the closed-form solution of the predictor in the form of a spectral filter, i.e., $W=VSV^\top$, where $V$ is the eigenvector of $C_{\tau_x}$, and $S=\operatorname{diag}(s_1,\dots,s_k)$,
\begin{equation}
   s_i=\sqrt{\frac{d_i}{d_i+\sigma}}, \quad i=1,\dots,k,
\end{equation}
 is monotonically decreasing w.r.t. $i$ because $d_1\geq\dots\geq d_k$, meaning that $W$ is a low-pass spectral filter. Therefore, applying Theorem 4 in \citet{zhuo2023towards}, we can conclude that the effective rank will improve after the gradient descent step of the MSE loss.
% Lemma \ref{appe}
% \hongyu{TBD by Yifei.}
% The proof follows that of \citep{zhuo2023towards}. Here, we provide an outline.
\end{proof}

\section{Comparison}
\subsection{World Model}
The most commonly referenced world models are those employing RSSM (Recurrent State-Space Model), including Dreamer\citep{hafner2019dream}, DreamerV2\citep{hafner2020mastering}, DreamerV3\citep{hafner2023mastering}, TIA\citep{fu2021learning} and others. These approaches predominantly fall under the category of model-based techniques, which learn a latent dynamics model with both deterministic and stochastic components. Notably, learning to reconstruct the reward and the observation explicitly is a key component of these models. While these approaches and ours are different in two aspects: i)\modelname~do not reconstruct the observation, instead, we mask the observation inputs and reconstruct the latent feature of the states; ii)\modelname~is totally model-free, as the reward model and explicit transition model are unnecessary. 

\subsection{SimSR}
Among the series of bisimulation-based approaches, SimSR has achieved commendable results.  However, it has certain limitations. First, it may encounter challenges in sparse reward settings as the objective entirely depends on the reward function/model. Furthermore, from the model architecture perspective, it introduces additional bias by explicitly modeling the latent dynamic model. In contrast, to enhance the perception ability, we turn to the utilization of the masking strategy and the transformer structure to incorporate spatiotemporal information, which we have proven is theoretically grounded in Theorem~\ref{thm:effective-rank}. 

\subsection{MLR}
MLR is an approach that employs a masking strategy to reconstruct original image features, typically regarded as an auxiliary task. It prioritizes the extraction of inherent information from images and often disregards task-related information. From a model architecture perspective, MLR utilizes a complex network structure, including two convolutional neural networks (CNN), a Transformer, two projection heads, and one prediction head. In contrast to MLR, our model\modelname~is fundamentally rooted in the principles of bisimulation, therefore tailoring its objectives and training process to benefit the bisimulation. This design allows our model to strike a favorable balance between efficiency and capability, especially in capturing control-centric (reward-associated) state dynamics. In comparison, MLR incorporates additional projection and prediction heads, contributing to increased complexity and computational demands. Furthermore, MLR neglects reward-related information, limiting its ability to capture state dynamics effectively.

\section{All Experimental Results}
\label{appsec:allexp}

\subsection{Performance on Atari}
\label{appsec:allatari}
As shown in Table~\ref{tab:atari}, we compared the results of \modelname~across all 26 games with the SOTA methods on the Atari-100k benchmark based on 3 random seeds. In the evaluation, the \modelname~reaches the highest scores on 16 out of 26 games and outperforms the SOTA methods on the interquartile-mean (IQM) and optimality gap (OG) with 95\% confidence intervals (CIs). To further demonstrate the effectiveness of \modelname, we also compared the human-normalized scores (HNS) with 95\% CIs in Figure~\ref{atari_dist}, demonstrating the effectiveness of our method. The results of DER~\citep{van2019use}, OTR~\citep{kielak2020recent}, CURL~\citep{curl}, DrQ\citep{drq} and SPR~\citep{schwarzer2020data}, are from rliable~\citep{DBLP:conf/nips/AgarwalSCCB21}, based on 100 random seeds. The results of PlayVirtual\citep{yu2021playvirtual} are based on 15 random seeds. The results of MLR~\citep{mlr} and SimSR\citep{zang2022simsr}, are based on 3 random seeds.

\begin{table}[htbp]
\belowrulesep=0pt
\aboverulesep=0pt
\caption{Results on Atari-100k.}
\centering
\label{tab:atari}
\scalebox{0.65}{
\begin{tabular}{ccccccccccc|c}
\toprule
\textbf{Game} & \textbf{Human} & \textbf{Random} & \textbf{DER} & \textbf{OTR} & \textbf{CURL} & \textbf{DrQ} & \textbf{SPR} & \textbf{PlayVirtual} & \textbf{MLR} & \textbf{SimSR} & \textbf{Ours} \\ \midrule
Alien & 7127.7 & 227.8 & 802.3 & 570.8 & 711 & 734.1 & 841.9 & 947.8 & 990.1 & 756.1 & \textbf{1028.9} \\
Amidar & 1719.5 & 5.8 & 125.9 & 77.7 & 113.7 & 94.2 & 179.7 & 165.3 & 227.7 & 135.9 & \textbf{239.4} \\
Assault & 742 & 222.4 & 561.5 & 330.9 & 500.9 & 479.5 & 565.6 & 702.3 & 643.7 & 622.5 & \textbf{726.3} \\
Asterix & 8503.3 & 210 & 535.4 & 334.7 & 567.2 & 535.6 & \textbf{962.5} & 933.3 & 883.7 & 853.3 & 943.3 \\
Bank Heist & 753.1 & 14.2 & 185.5 & 55 & 65.3 & 153.4 & \textbf{345.4} & 245.9 & 180.3 & 100 & 294.7 \\
Battle Zone & 37187.5 & 2360 & 8977 & 5139.4 & 8997.8 & 10563.6 & 14834.1 & 13260 & 16080 & 12571 & \textbf{16797} \\
Boxing & 12.1 & 0.1 & -0.3 & 1.6 & 0.9 & 6.6 & 35.7 & \textbf{38.3} & 26.4 & 2 & 38 \\
Breakout & 30.5 & 1.7 & 9.2 & 8.1 & 2.6 & 15.4 & 19.6 & \textbf{20.6} & 16.8 & 12.5 & 20 \\
Choppe Cmd & 7387.8 & 811 & 925.9 & 813.3 & 783.5 & 792.4 & 946.3 & 922.4 & 910.7 & 820 & \textbf{983} \\
Crazy Climber & 35829.4 & 10780.5 & 34508.6 & 14999.3 & 9154.4 & 21991.6 & \textbf{36700.5} & 23176.7 & 24633.3 & 23695 & 25385 \\
Demo Attack & 1971 & 152.1 & 627.6 & 681.6 & 646.5 & 1142.4 & 517.6 & \textbf{1131.7} & 854.6 & 1093 & 1063.5 \\
Freeway & 29.6 & 0 & 20.9 & 11.5 & 28.3 & 17.8 & 19.3 & 16.1 & 30.2 & 29.7 & \textbf{31.5} \\
Frostbite & 4334.7 & 65.2 & 871 & 224.9 & 1226.5 & 508.1 & 1170.7 & 1984.7 & 2381.1 & 1653 & \textbf{2614.8} \\
Gopher & 2412.5 & 257.6 & 467 & 539.4 & 400.9 & 618 & 660.6 & 684.3 & 822.3 & 792 & \textbf{824.9} \\
Hero & 30826.4 & 1027 & 6226 & 5956.5 & 4987.7 & 3722.6 & 5858.6 & \textbf{8597.5} & 7919.3 & 6552 & 8575 \\
Jamesbond & 302.8 & 29 & 275.7 & 88 & 331 & 251.8 & 366.5 & 394.7 & 423.2 & 401 & \textbf{437.7} \\
Kangaroo & 3035 & 52 & 581.7 & 348.5 & 740.2 & 974.5 & 3617.4 & 2384.7 & 8516 & 8426 & \textbf{8632.3} \\
Krull & 2665.5 & 1598 & 3256.9 & 3655.9 & 3049.2 & 4131.4 & 3681.6 & 3880.7 & 3923.1 & 3639 & \textbf{4444.7} \\
Kung Fu Master & 22736.3 & 258.5 & 6580.1 & 6659.6 & 8155.6 & 7154.5 & \textbf{14783.2} & 14259 & 10652 & 9986 & 11406 \\
Ms Pacman & 6951.6 & 307.3 & 1187.4 & 908 & 1064 & 1002.9 & 1318.4 & 1335.4 & 1481.3 & 1128 & \textbf{1496.1} \\
Pong & 14.6 & -20.7 & -9.7 & -2.5 & -18.5 & -14.3 & -5.4 & -3 & \textbf{4.9} & -6.2 & \textbf{4.9} \\
Private Eye & 69571.3 & 24.9 & 72.8 & 59.6 & 81.9 & 24.8 & 86 & 93.9 & 100 & 46.3 & \textbf{110} \\
Qbert & 13455 & 163.9 & 1773.5 & 552.5 & 727 & 934.2 & 866.3 & 3620.1 & 3410.4 & 2760.3 & \textbf{3925} \\
Road Runner & 7845 & 11.5 & 11843.4 & 2606.4 & 5006.1 & 8724.7 & 12213.1 & \textbf{13429.4} & 12049.7 & 11892 & 12704.5 \\
Seaquest & 42054.7 & 68.4 & 304.6 & 272.9 & 315.2 & 310.5 & 558.1 & 532.9 & 628.3 & 600 & \textbf{653.7} \\
Up N Down & 11693.2 & 533.4 & 3075 & 2331.7 & 2646.4 & 3619.1 & \textbf{10859.2} & 10225.2 & 6675.7 & 6861 & 8602.5 \\ \midrule
Interquartile Mean & 1.000 & 0.000 & 0.183 & 0.117 & 0.113 & 0.224 & 0.337 & 0.374 & 0.432 & 0.321 & \textbf{0.501} \\
Optimality Gap & 0.000 & 1.000 & 0.698 & 0.819 & 0.768 & 0.692 & 0.577 & 0.558 & 0.522 & 0.593 & \textbf{0.488} \\ \bottomrule
\end{tabular}}
\end{table}

\begin{figure*}[t]
\centering
\includegraphics[width=0.95\textwidth]{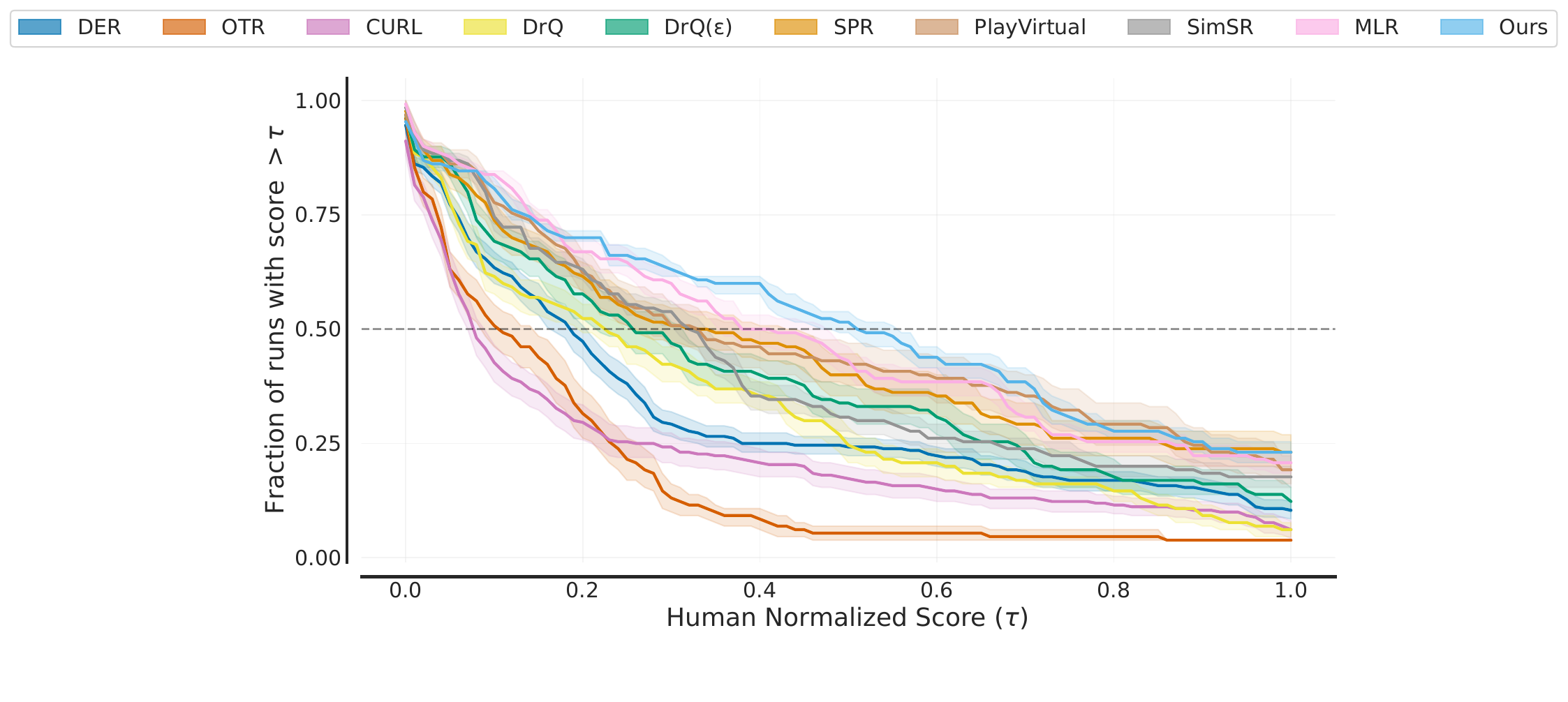}
\caption{Performance profiles on the Atari-100k benchmark based on human-normalized score distributions. Shaded regions indicate 95\% confidence bands. }
\label{atari_dist}
\end{figure*}

\subsection{Performance on DMControl}
\label{appsec:dmc}

Beyond the main manuscript's numerical results in Table ~\ref{tab:dmc_default}, we also conducted an analysis over 100k interaction steps in the DMControl task in Table~\ref{tab:dmc_default_100k}. In terms of sample efficiency, \modelname~demonstrates notable superiority, as evidenced by its performance at the 100k interaction steps mark. Particularly for challenging tasks characterized by sparse rewards, our method achieves positive rewards within a reduced number of interaction steps.

We draw the convergence curves between \modelname~ and other methods in Figure~\ref{fig:dmc} based on 5 random seeds for each environment. The curves reveal that \modelname~ can achieve convergence faster than other methods in all 9 environments and finally obtain the highest reward, demonstrating its effectiveness and efficiency. Furthermore, \modelname~ converged much faster than SimSR and other methods in difficult environments with sparse rewards, such as \textit{Cartpole, Swingup Sparse} and \textit{Hopper, Hop}, suggesting that our proposed model can effectively handle the collapse of the bisimulation objective function when the rewards are nearly zero. Additionally, in the \textit{Reacher, Easy}, while SimSR quickly achieved high performance in the early stages of training, its performance gradually deteriorated, and variance increased over training, indicating that the dynamics model learned by SimSR may violate the compactness requirement, making it vulnerable to environmental noise and thus causing measurement expansion. In comparison, \modelname~ improved its performance significantly and rapidly, demonstrating its remarkable stability, which proves the superiority of implicit dynamics modeling.

\begin{table}[b]
\belowrulesep=0pt
\aboverulesep=0pt
\caption{Results (mean $\pm$ std) on the DMControl-100k benchmarks.}
\centering
\label{tab:dmc_default_100k}
\scalebox{0.8}{
\begin{tabular}{cccccc|c}
\toprule
\textbf{DMControl-100k} & \textbf{CURL} & \textbf{DrQ} & \textbf{PlayVirtual} & \textbf{MLR} & \textbf{SimSR} & \textbf{Ours} \\ \midrule

Ball in cup, Catch & 802$\pm$33 & 922$\pm$44 & 927$\pm$38 & 932$\pm$9 & 854$\pm$34 & \textbf{952$\pm$10} \\ 
Cartpole, Swingup & 619$\pm$101 & 728$\pm$59 & \textbf{817$\pm$33} & 809$\pm$56 & 792$\pm$43 & 812$\pm$46 \\
Cartpole, Swingup Sparse & 0$\pm$0 & 0$\pm$0 & 0$\pm$0 & 0$\pm$0 & 0$\pm$0 & \textbf{7$\pm$5} \\
Cheetah, Run & 210$\pm$80 & 302$\pm$123 & 448$\pm$73 & 453$\pm$36 & 435$\pm$35 & \textbf{481$\pm$117} \\
Finger, Spin & 756$\pm$123 & 885$\pm$144 & 897$\pm$73 & 890$\pm$71 & 894$\pm$89 & \textbf{904$\pm$94} \\
Finger, Turn Easy & 9$\pm$4 & 31$\pm$9 & 153$\pm$39 & 173$\pm$56 & 207$\pm$4 & \textbf{239$\pm$18} \\
Finger, Turn Hard & 0$\pm$0 & 0$\pm$0 & 3$\pm$3 & 8$\pm$4 & 9$\pm$5 & \textbf{13$\pm$2} \\
Hopper, Hop & 0$\pm$0 & 12$\pm$17 & 2$\pm$0 & 3$\pm$3 & 8$\pm$5 & \textbf{13$\pm$6} \\
Hopper, Stand & 15$\pm$2 & 86$\pm$42 & 24$\pm$3 & 34$\pm$13 & 81$\pm$99 & \textbf{168$\pm$88} \\
Pendulum, Swingup & 0$\pm$0 & 0$\pm$0 & 29$\pm$6 & 71$\pm$10 & 76$\pm$25 & \textbf{80$\pm$10} \\
Walker, Walk & 423$\pm$35 & 612$\pm$50 & 465$\pm$113 & \textbf{647$\pm$105} & 518$\pm$24 & 599$\pm$15 \\ 
%\midrule

%Mean & \\
%Median & \\  
\bottomrule
\end{tabular}}
\end{table}

\begin{figure*}[t]
  \centering
    \subfigure[Ball in cup, Catch]{\includegraphics[width=0.3\textwidth]{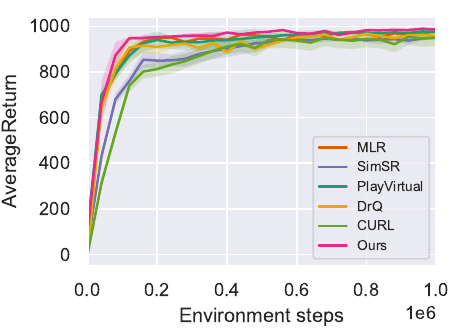}}  
    \subfigure[Cartpole, Swingup]{\includegraphics[width=0.3\textwidth]{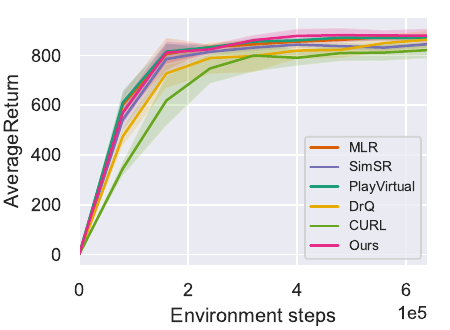}} 
    \subfigure[Cartpole, Swingup Sparse]{\includegraphics[width=0.3\textwidth]{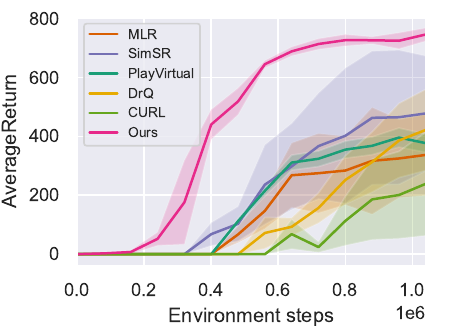}} \\
    \subfigure[Cheetah, Run]{\includegraphics[width=0.3\textwidth]{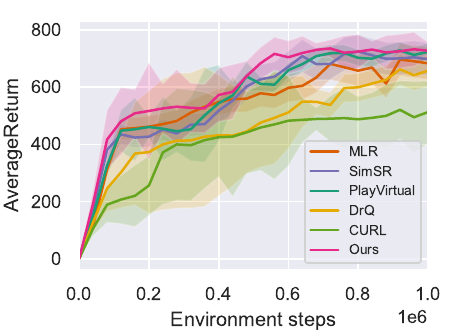}} 
    \subfigure[Finger, Spin]{\includegraphics[width=0.3\textwidth]{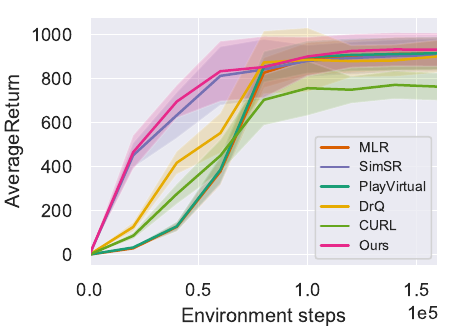}} 
    \subfigure[Hopper, Hop]{\includegraphics[width=0.3\textwidth]{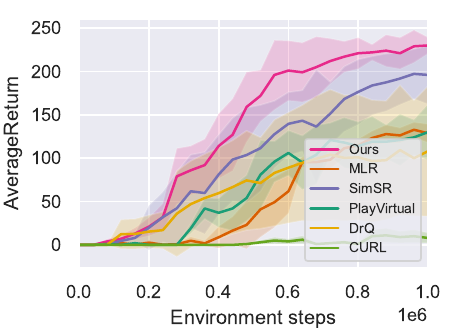}} \\
    \subfigure[Hopper, Stand]{\includegraphics[width=0.3\textwidth]{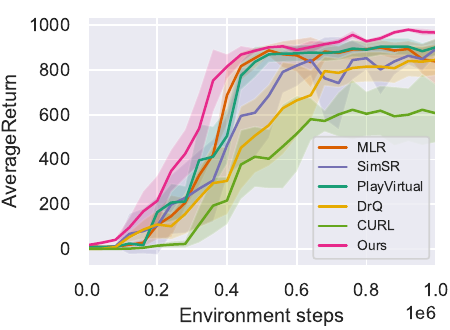}} 
    \subfigure[Reacher, Easy]{\includegraphics[width=0.3\textwidth]{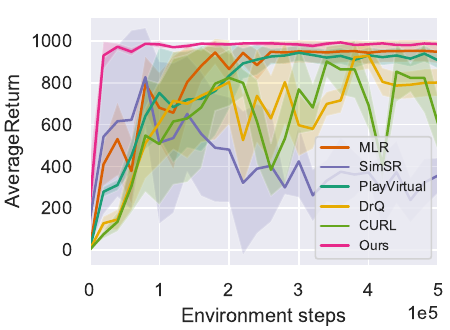}} 
    \subfigure[Walker, Walk]{\includegraphics[width=0.3\textwidth]{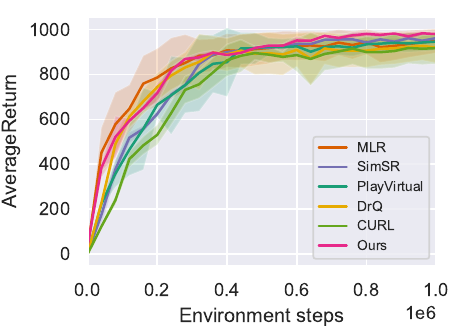}} 
    
  \caption{Results on DMContorl}
    \label{fig:dmc}
    \vspace{0.2in}
\end{figure*}

\subsection{Extended Ablation Study}
\label{appsec:ext_ablation}

This section evaluates the impact of mask ratio and different objective components. We conducted ablation studies on DMControl-500k and ran each model with 3 random seeds individually.

\paragraph{Mask ratio.}

We examined the effects of different mask ratios across four different tasks, running each model with 3 random seeds. Figure~\ref{fig:maskratio} shows the averaged evaluation results.
Notably, contrary to the prevalent belief that images and videos contain a substantial amount of redundant information, our findings suggest that the optimal mask ratio for sequential tasks is approximately 0.5. This is relatively lower than the results in conventional vision domains~\citep{mae,videomae}, which is nearly 0.9, yet it aligns with the result in ~\citet{mlr}. We attribute this to the fact that an excessively low mask ratio may fail to filter out irrelevant spatiotemporal redundancy, while an overly high ratio may inadvertently remove valuable control-centric information. Moreover, considering that \textit{Cartpole Swingup Sparse} and \textit{Hopper Hop} are notably more challenging tasks than \textit{Walker Walk} and \textit{Reacher Easy}, it can be inferred that an appropriate mask ratio becomes increasingly vital for state representation learning as the task's difficulty escalates. Therefore, we adopt a mask ratio of 0.5 for all experiments.

\begin{figure}[htbp]
\begin{floatrow}
\ffigbox{%
    {{\includegraphics[width=0.46\textwidth]{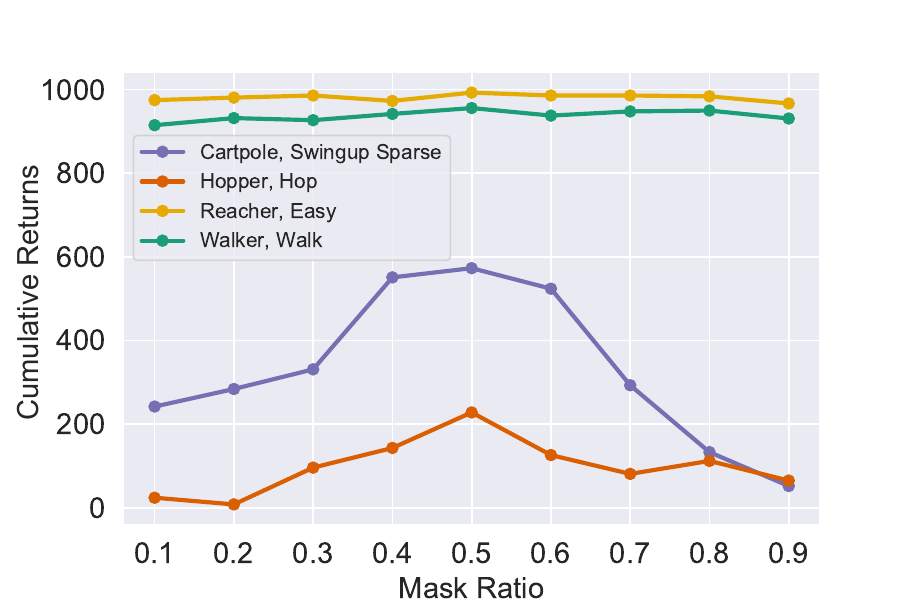} }}%
}{%
    \caption{Comparison of different mask ratios in 4 different environments.}
    \label{fig:maskratio}%
}
\ffigbox{%
    {{\includegraphics[width=0.5\textwidth]{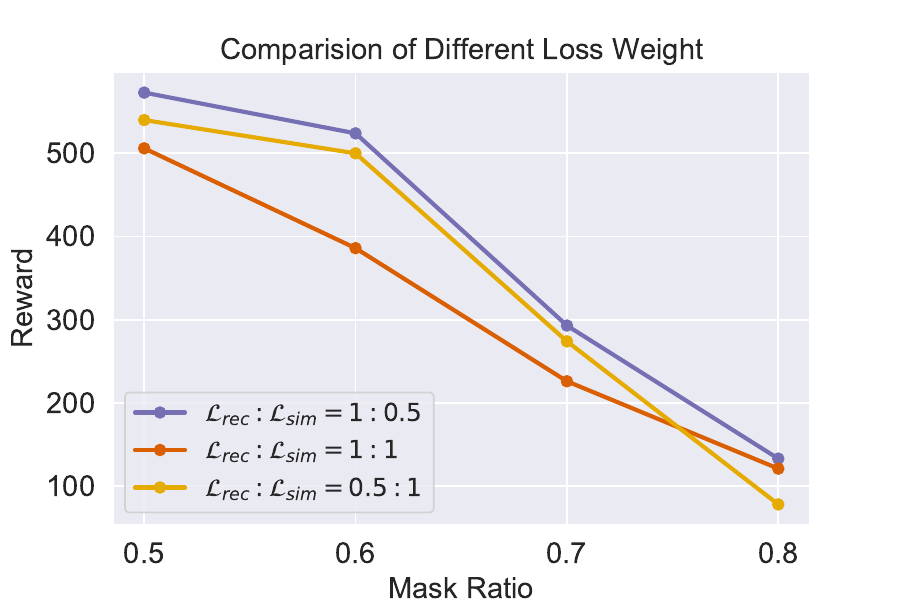} }}%
}{%
    \caption{Comparison of different loss weight.}
    \label{fig:lossweight}
}
\end{floatrow}
\end{figure}

\begin{wraptable}{rt}{0.5\textwidth}
  \begin{center}
  \vspace{0em}
  %  \includegraphics[width=0.85\linewidth]{video.pdf}
  %  \caption{Comparison of different objective components\label{tab:learningtarget}}%
\centering
\scalebox{0.8}{
\begin{tabular}{@{}cccc@{}}
\toprule
\textbf{DMControl-500k} & \textbf{only $\mathcal{L}_{rec}$} & \textbf{only $\mathcal{L}_{sim}$} & \textbf{\modelname~} 
 \\ \midrule

 Ball in cup, Catch & 953$\pm$11 & 946$\pm$23 & \textbf{984$\pm$2} \\
    Cartpole, Swingup & 857$\pm$16 & 836$\pm$24 & \textbf{883$\pm$26} \\
    Cart. Swing. Sparse & 25$\pm$7 & 86$\pm$17 & \textbf{518$\pm$45} \\
    Cheetah, Run & 651$\pm$12 & 713$\pm$18 & \textbf{748$\pm$44} \\
    Finger, Spin & 887$\pm$55 & 910$\pm$31 & \textbf{971$\pm$26} \\
    Hopper, Hop & 97$\pm$31 & 203$\pm$22 & \textbf{234$\pm$13} \\
    Hopper, Stand & 848$\pm$35 & 773$\pm$102 & \textbf{927$\pm$18} \\
    Reacher, Easy & 869$\pm$24 & 876$\pm$33 & \textbf{992$\pm$4} \\
    Pendulum, Swingup & 355$\pm$103 & 396$\pm$27 & \textbf{458$\pm$7}\\
    Walker, Walk & 902$\pm$30 & 925$\pm$9 & \textbf{951$\pm$18} \\ \bottomrule
\end{tabular}}
\end{center}

  \caption{Comparison of different objective components\label{tab:learningtarget}}.
\end{wraptable}

\paragraph{Objective components.}
We studied the contributions of different learning objective by selectively removing one to evaluate the resulting performance of the other. As shown in Table~\ref{tab:learningtarget}, the ablated objectives underperform compared to the complete objective, indicating that both components, $\mathcal{L}_\text{rec}$ and $\mathcal{L}_\text{sim}$, play crucial roles in our framework. It is worth noting that $\mathcal{L}_\text{rec}$ can be viewed as a simplified modification of MLR, omitting the projection and prediction in the head; hence, a marginally lower performance relative to MLR is expected. However, when combined with bisimulation-based components, the final performance experiences a substantial enhancement. This further demonstrates the effectiveness of our proposed framework.

\paragraph{Weighing the losses.}
We aim to find the optimal balance between reconstruction loss and bisimulation loss under varying high mask ratios to achieve the best performance. The proportion of loss can be flexibly designed. We aim to find the optimal balance between reconstruction loss and bisimulation loss under varying high mask ratios to achieve the best performance. As shown in Figure~\ref{fig:lossweight}, we tried 3 different proportion schemes in \textit{Cartpole, Swingup Sparse} environment. 
According to the conducted experiments, the optimal performance is achieved when the ratio of reconstruction loss to bisimulation loss is set to 1:0.5. This indicates the significant role that reconstruction loss plays in the model training process.The reconstruction loss helps the model to better capture reward-free control information. It's a valuable insight that helps fine-tune the model and could guide similar future models.

\section{Impact of Latent Space Reconstruction on Model Learning}
We propose the following discussion which aims at explaining how the process of reconstruction in the latent space affects the learning of our model \modelname. In a departure from conventional methodologies, our approach integrates a unique blend of the bisimulation loss for capturing reward-specific information and the reconstruction loss in the latent space for eliminating spatiotemporal redundancies.

Reconstructing a visual signal with high information density through low-dimensional feature embeddings, a common practice in the computer vision domain, can successfully preserve spatiotemporal information. However, it is unnecessary and inefficient to reconstruct at the pixel level as this contains significant redundancies. Therefore, we opt to reconstruct the latent features instead of raw observations, maintaining essential information relevant to control while reducing unnecessary spatiotemporal redundancies. The incorporation of a masking strategy further streamlines the reconstruction process within the latent space.

It is essential to recognize that both components assume pivotal roles, underscoring the notion that the control-centric representation is not contingent upon a single element. However, it is noteworthy that, given the nature of the bisimulation metric, it primarily excels at learning reward-specific information rather than control-centric information.   In environments characterized by sparse rewards, as stipulated in Theorem~\ref{thm:sparse}, bisimulation inevitably encounters limitations. The incorporation of our latent space reconstruction loss aims to mitigate this inherent limitation, as elucidated in Theorem ~\ref{thm:effective-rank}.   In the case where the reconstruction loss is used, even within such challenging environments, agents can still engage with their surroundings, gaining insights into self-control strategies.   This premise underscores the primary motivation behind our work.

\section{Implementation Details}

\subsection{Hyperparameters}

We present all hyperparameters used for the Atari-100k benchmark in Table~\ref{tab:atarihyper} and the DMControl benchmarks in Table~\ref{tab:dmchyper}. We follow prior work (i.e., SAC~\citep{haarnoja2018soft} and Rainbow~\citep{hessel2018rainbow}) for the policy learning hyperparameters.

\begin{algorithm}[tb]
\caption{\modelname~}
\label{algorithm:rebis}
\begin{flushleft}
    \textbf{Require}: The online encoder $\phi$, the momentum encoder $\hat{\phi}$, the Transformer $G$, and the policy learning networks $\omega$, parameterized by $\theta_{\phi}$, $\theta_{\hat{\phi}}$, $\theta_{G}$, and $\theta_{\omega}$, respectively; the cube masking function CubeMask(·), the optimizer Optimizer(·, ·).
\end{flushleft}
\begin{algorithmic}[1] 
\STATE Determine the weight of bisimulation loss $\beta$, observation sequence length $K$, mask ratio $\eta$, cube shape $k\times h\times w$, and EMA coefficient $m$.
\STATE Initialize a replay buffer $\mathcal{D}$.
\STATE Initialize all parameters.
\WHILE{train}
\STATE Interact with the environment based on the policy
\STATE Sample the sequence of $K$ observations $\{o_t, o_{t+1},\dots,o_{t+K-1}\}$ and corresponding actions $\{a_t, a_{t+1},\dots,a_{t+K-1}\}$ from replay buffer $\mathcal{D}$
\STATE Cube masking the observation sequence:\\
$\{o^{'}_{t}, o^{'}_{t+1},\dots,o^{'}_{t+K-1}\} \leftarrow CubeMask(\{o_{t}, o_{t+1},\dots,o_{t+K-1}\})$
\STATE Siamese Encoding:\\
$\tau_K^s \leftarrow \{\phi(o^{'}_t), \phi(o^{'}_{t+1}),\dots,\phi(o^{'}_{t+K-1})\}$,\\
$\tau_K^{\hat{\phi}(o)} \leftarrow  \{\hat{\phi}(o_{t}), \hat{\phi}(o_{t+1}),\dots,\hat{\phi}(o_{t+K-1})\}$
\STATE Perform dynamics model:
$\hat{\tau}_K^s \leftarrow G(\tau_K^s)$
\STATE Calculate reconstruction loss: $\mathcal{L}_\text{rec} = \text{MSE}(\hat{\tau}_K^s,\tau_K^{\hat{\phi}(o)})$
\STATE Calculate bisimulation loss: $\mathcal{L}_\text{sim} = \text{MSE}\left(\bar{d}(\hat{\phi}(o_{i}),\hat{\phi}(o_{j})), \mathcal{F}^{\pi} \bar{d}(\hat{\phi}(o_{i}),\hat{\phi}(o_{j}))\right)$
\STATE Updating with joint objective: $\mathcal{L}_\text{total} = \mathcal{L}_\text{rec} + \beta \mathcal{L}_\text{sim}$
\STATE Calculate RL loss $\mathcal{L}_\text{RL}$ based on a given base RL algorithm (e.g., SAC)
\STATE Update online parameters: $\theta_{\phi}$, $\theta_{\hat{\phi}}$, $\theta_{G}$, $\theta_{\omega}$ $\leftarrow$ $Optimizer$($\theta_{\phi}$, $\theta_{\hat{\phi}}$, $\theta_{G}$, $\theta_{\omega}$, ($\mathcal{L}_\text{RL}$, $\mathcal{L}_\text{total}$))
\STATE Update momentum parameters: $\hat{\phi} \leftarrow m \hat{\phi}  + (1 - m) \phi$
\ENDWHILE
\end{algorithmic}
\end{algorithm}

\subsection{Network Architecture}
\label{appsec:architecture}

Our structure is based on an transformer encoder. This transformer encoder comprises two standard attention layers, each have a single attention head. To ensure compatibility in dimensions between the state and action representations, we employ an FC layer as the action embedding head. Besides, we utilize relative position embeddings in our model, drawing inspiration from the prevailing techniques in the domain of transformer architectures~\citep{DBLP:conf/nips/VaswaniSPUJGKP17}. By doing so, we adopt the relative positional embeddings to encode relative temporal positional information into both state and action tokens.

\subsection{Algorithm}

The training algorithm of our method is presented in Algorithm~\ref{algorithm:rebis}.

\subsection{Hardware and Wall-Clock Time}
\label{appsec:Hardware}

We implemented \modelname~ and performed all experiments with the PyTorch framework, based on a server with 4 GeForce RTX 3090 GPUs, 48 Intel Xeon CPUs, and 128GB memory. On Atari, the average wall-clock training time is 8.1 and 8.6 hours for \modelname~ and MLR respectively. On DM-Control tasks, the averaged wall-clock training time is 6.2 and 6.4 hours for \modelname~ and MLR, respectively. 

One can observe that the training time of \modelname~ is marginally faster than the MLR. This disparity is possibly due to the complexity of MLR's model structure. In contrast to \modelname~, MLR employs additional projection and prediction heads, as well as a momentum projection head specifically for reconstruction targets. These additional layers contribute to increased computational demand for forward pass on each iteration during training, which consequently extends the overall training duration. In summary, our proposed method, \modelname~, can expedite the training process while ensuring superior outcomes, compared with MLR.

\begin{table}[]
\caption{Hyperparameters used for Atari benchmark.}
\label{tab:atarihyper}
\begin{tabular}{@{}ll@{}}
\toprule
\textbf{Hyperparameter~~~~~~~~~~~~~~~~~~~~~~~~~~~~~~~~~~~~~~~~~~~~~~~~~~} & \textbf{Value~~~~~~~~~~~~~~~~~~~~~~~~~~~~~~~~~~~~~~~~~~~~~~~~~~} \\ \midrule
Stacked frames & 4 \\
Observation shape & (84, 84) \\
Action repeat & 4 \\
Replay buffer size & 100000 \\
Training steps & 100000 \\
Max frames per episode & 108000 \\
Minimum sampling replay size & 2000 \\
Mini-batch size & 64 \\
Optimizer & Adam \\
Optimizer: learning rate & 0.0001 \\
Optimize: $\beta_1$, $\beta_2$ & 0.9,0.999 \\
Optimize: $\epsilon$ & 0.00015 \\
Encoder EMA $m$ & 0.95 \\
Latent dimension $d$ & 50 \\
Max gradient norm & 10 \\
Update & Distributional Q \\
Dueling & True \\
Support of Q-distribution & 51 bins \\
Discount factor & 0.99 \\
Reward clipping Frame stack & {[}-1, 1{]} \\
Priority exponent & 0.5 \\
Priority correction & 0.4 $\rightarrow$ 1 \\
Exploration & Noisy nets \\
Noisy nets parameter & 0.5 \\
Evaluation trajectories & 100 \\
Replay period every step & 1 \\
Updates per step & 2 \\
Multi-step return length & 10 \\
Q network: channels & 32, 64, 64 \\
Q network: filter size & 8$\times$8, 4 $\times$ 4, 3 $\times$ 3 \\
Q network: stride & 4, 2, 1 \\
Q network: hidden units & 256 \\
Weight of bisimulation loss $\beta$ & 0.5 \\
Mask ratio $\eta$ & 50\% \\
Sequence length $K$ & 16 \\
Cube spatial size $h \times w$ & $12\times 12$ \\
Cube depth $k$ & 8 \\ \bottomrule
\end{tabular}
\end{table}

\begin{table}[tb]
\caption{Hyperparameters used for DMControl benchmark.}
\label{tab:dmchyper}
\begin{tabular}{@{}ll@{}}
\toprule
\textbf{Hyperparameter~~~~~~~~~~~~~~~~~~~~~~~~~~~~~~~~~~~~~~~~~~~~~~~~~~} & \textbf{Value~~~~~~~~~~~~~~~~~~~~~~~~~~~~~~~~~~~~~~~~~~~~~~~~~~} \\ \midrule
Stacked frames & 3 \\
Observation shape & (84, 84) \\
Replay buffer size & 100000 \\
Initial steps & 1000 \\
Action repeat & 2 \textit{Finger-spin} and \textit{Walker-walk};\\
&8 \textit{Cartpole-swingup};\\ & 4 \textit{otherwise} \\
Evaluation episodes & 10 \\
Optimizer & Adam \\
~~$(\beta_1, \beta_2) \rightarrow (\theta_\phi, \theta_G, \theta_\omega)$ & (0.9,0.999)\\
~~$(\beta_1, \beta_2) \rightarrow (\alpha)$ &  (0.5,0.999) \\
Learning rate & 0.0002 \textit{Cheetah-run};\\
& 0.001 \textit{otherwise} \\
Batch size & 128 \\
Q-function EMA $m$ & 0.99 \\
Critic target update freq & 2 \\
Discount factor $\gamma$ & 0.99 \\
Encoder EMA $m$ & 0.9 \textit{Walker-walk};\\
& 0.95 \textit{otherwise} \\
Latent dimension $d$ & 50 \\
Weight of bisimulation loss $\beta$ & 0.5 \\
Mask ratio $\eta$ & 50\% \\
Sequence length $K$ & 16 \\
Cube spatial size $h \times w$ & $10 \times 10$ \\
Cube depth $k$ & 4 \textit{Cartpole-swingup} and \textit{Reacher-easy};\\
&8 \textit{otherwise} \\ \bottomrule
\end{tabular}
\end{table}

\clearpage

\end{document}